\newcommand{\tabincell}[2]{\begin{tabular}{@{}#1@{}}#2\end{tabular}}
\newenvironment{breakablealgorithm}
{% \begin{breakablealgorithm}
	\begin{center}
		\refstepcounter{algorithm}% New algorithm
		\hrule height.8pt depth0pt \kern2pt% \@fs@pre for \@fs@ruled
		\renewcommand{\caption}[2][\relax]{% Make a new \caption
			{\raggedright\textbf{\ALG@name~\thealgorithm} ##2\par}%
			\ifx\relax##1\relax % #1 is \relax
			\addcontentsline{loa}{algorithm}{\protect\numberline{\thealgorithm}##2}%
			\else % #1 is not \relax
			\addcontentsline{loa}{algorithm}{\protect\numberline{\thealgorithm}##1}%
			\fi
			\kern2pt\hrule\kern2pt
		}
	}{% \end{breakablealgorithm}
		\kern2pt\hrule\relax% \@fs@post for \@fs@ruled
	\end{center}
}
\algnewcommand{\LineComment}[1]{\Statex \hskip\ALG@thistlm \(\hspace{-4mm}\triangleright\hspace{1mm}\) #1}
\algnewcommand{\AutoIndent}{\Statex \hskip\ALG@thistlm}
\algnewcommand{\Notation}{
	
	\Require{\vspace{1mm}}
	}
\algnewcommand{\Notationx}[1]{
	
	\Require{#1}
	}
\algnewcommand{\MyState}[1]{\State
	\parbox[t]{\dimexpr\linewidth-\ALG@thistlm}{\hangindent=0pt\strut\hangafter=1#1\strut}}
\algnewcommand{\MyIf}[1]{
	
	\If{
	\parbox[t]{\dimexpr\linewidth-\ALG@thistlm}{\hangindent=0pt\strut\hangafter=1#1 \textbf{then}\strut}}
	
}
\algnewcommand\algorithmicforeach{\textbf{for each:}}
\algnewcommand\ForEach{\item[ \algorithmicforeach]}
\newtheorem{theorem}{\bf Theorem}
\newtheorem{remark}{\it Remark}
\newtheorem{assum}{\it Assumption}
\newcommand{\BibTeX}{{\rmfamily B\kern-.05em \textsc{i\kern-.025em b}\kern-.08em
T\kern-.1667em\lower.7ex\hbox{E}\kern-.125emX}}
\begin{document}
\title{Distributed Localization without Direct Communication Inspired by Statistical Mechanics}
\author{Jingxian Wang$^{1,\ast}$, Tianye Wang$^{1}$, Wei Wang$^{2,3,\ast}$, Xiwang Dong$^{4}$ and Yandong Wang$^{4}$
\thanks{This work was supported by the National Natural Science Foundation of China under Grants 61922008, 61973013 and 61873011.}
\thanks{$^1$School of Physics, Peking University, Beijing 100871, China.}
\thanks{$^2$Department of Urban Studies and Planning, Massachusetts Institute of Technology, Cambridge, MA 02139, USA.}
\thanks{$^3$Computer Science and Artificial Intelligence Laboratory, Massachusetts Institute of Technology, Cambridge, MA 02139, USA.}
\thanks{$^4$School of Automation Science and Electrical Engineering, Beihang University, Beijing 100191, China.}
\thanks{$^{\ast}$Corresponding authors: wangjingxian@pku.edu.cn; wweiwang@mit.edu.}
}
\maketitle

\begin{abstract}
Distributed localization is essential in many robotic collective tasks such as shape formation and self-assembly.
Inspired by the statistical mechanics of energy transition, this paper presents a fully distributed localization algorithm named as virtual particle exchange (VPE) localization algorithm, where each robot repetitively exchanges virtual particles (VPs) with neighbors and eventually obtains its relative position from the virtual particle (VP) amount it owns. Using custom-designed hardware and protocol, VPE localization algorithm allows robots to achieve localization using sensor readings only, avoiding direct communication with neighbors and keeping anonymity. Moreover, VPE localization algorithm determines the swarm center automatically, thereby eliminating the requirement of fixed beacons to embody the origin of coordinates. Theoretical analysis proves that the VPE localization algorithm can always converge to the same result regardless of initial state and has low asymptotic time and memory complexity.
Extensive localization simulations with up to 10000 robots and experiments with 52 low-cost robots are carried out, which verify that VPE localization algorithm is scalable, accurate and robust to sensor noises.
Based on the VPE localization algorithm, shape formations are further achieved in both simulations and experiments with 52 robots, illustrating that the algorithm can be directly applied to support swarm collaborative tasks.
\end{abstract}

\begin{IEEEkeywords}
Distributed Localization, Statistical Mechanics, Virtual Particle Exchange, Robot Swarm, Shape Formation
\end{IEEEkeywords}

\IEEEpeerreviewmaketitle

\section{Introduction}
\IEEEPARstart{S}{WARM} robotics has been drawing more and more attention in robotics community due to its attractive properties including scalability and adaptability, and high resistance to failure. Many collective tasks such as shape formation, reconfiguration, and self-assembly \cite{rubenstein2014programmable, zykov2005robotics,belke2017mori,kurokawa2008distributed,romanishin20153d,gilpin2008miche,formcont1,formcont2} usually require the knowledge of relative position of robots in the swarm. Deploying external localization beacons like indoor motion capture system\cite{Kushleyev2013,Preiss7989376} or outdoor Global Positioning System (GPS) \cite{Vicsek6943105, Hauert6095129} is one feasible solution for some swarm systems.
However, external beacons are not the solution for many  scenarios. For robots carrying out search-and-rescue tasks in hazardous or remote locations, setting up motion capture system is hard and GPS will be unavailable if robots are indoor or underground. Furthermore, carrying receiver of GPS can be a great burden for swarm systems consisting of numerous tiny robots which are usually weak in sensing and computation. When external beacons are not desirable, it would be more practical to rely on on-board sensing and/or internal communication among robots. The restrictions including sensing, computing, power and size  make the development of relative localization algorithms in swarm systems challenging.

Previous relative localization algorithms mainly fall into two categories: multi-hop and optimization-based. In multi-hop localization category, the simplest approach \cite{gilpin2008miche,rubenstein2014programmable,2dfaxmachine,pillai20063d} is to set up a certain amount of fixed pre-assigned beacons and let other robots acquire their positions by comparing with neighbors. This idea is further developed by adding Kalman filter \cite{savvides2002bits,Luft2018recursive} or uncertainty evaluation process \cite{rabaey2002robust,kim2018cooperative} to address the problem of error accumulation through multiple hops and achieve better localization accuracy. Some other efforts went into reducing information needed in multi-hop localization algorithms. For example, Moore \textit{et al}. employed sine theorem to estimate relative angle between robots in localization process \cite{Moore2004robust} with only relative distance information. Another example is Cornejo \textit{et al}. in which sine theorem is used to estimate relative distance between robots in localization process with only bearing measurements \cite{cornejo2013scale}. It is interesting that Pillai \textit{et al}. introduced a process to automatically select the origin of the coordinate system \cite{pillai20063d} and achieved localization without pre-defining the beacon point, though in this approach the origin of coordinate system is random, which is not favorable. A general deficiency in multi-hop methods is that one usually need to setup a fraction of predefined beacons first in order to obtain an accurate result, and low beacon fraction would lead to low localization accuracy due to propagation of error \cite{langendoen2003distributed,park2013angle}. This drawback poses difficulty to the deployment of large multi-hop localization systems.

In optimization-based category, robots obtain their localization results by implementing global optimization process like multi-dimensional scaling (MDS) \cite{shang2003localization} or regularized semi-definite programming (SDP) relaxation \cite{biswas2006semidefinite} to minimize the disagreement between the localization results and robots' measurements of relative position of neighbouring robots. However, these optimization methods are inherently centralized, thus are not suitable for large scale robot swarms. To address this problem, divide-and-conquer methods are developed to distribute optimization tasks to different robots \cite{fan2015d3d,funiak2009distributed}. However, task distribution could be complex, and alignment process \cite{funiak2009distributed} requires extra attention to avoid misaligned edges or other defects and will fail sometimes. Moreover, optimization-based methods do not always guarantee convergence to the most optimal solution and can generate highly unreasonable results in some occasions \cite{funiak2009distributed}.

It is worth noting that in many swarm tasks such as shape formation \cite{rubenstein2014programmable}, collective robotic construction \cite{Peterseneaau8479} and collective transport\cite{ZijianWangIJRR2016, ZijianWang7487163}, it is desirable to have a predictable and stable origin of coordinate system so that when robots move, their localization results will not vary significantly. This is not a problem when external or internal beacons are presented because beacons effectively define a stable coordinate system, however, things are different in homogeneous robot swarms where there is no predefined beacon. Multi-hop based algorithms rely heavily on predefined beacons. In a homogeneous robot swarm where no predefined beacon is available, the only way for multi-hop based algorithms at present is to randomly select a robot as the beacon as well as the origin \cite{pillai20063d}. Therefore, there is no guarantee that the origin will be the same in two trials of localization. Optimization-based algorithms suffer from the same issue as well because these algorithms' optimization goals usually only include the \textit{difference} of localization results, but not the absolute value of them.

%Therefore, the problem of developing a stable, accurate and easy-to-implement relative localization algorithm still requires further study.

When choosing an localization algorithm for a swarm robot system, one needs to evaluate multiple aspects of the algorithm. Since swarm robots typically have limited computing resources, algorithms which are stable and have low time and memory complexity are desirable. 
%or else as the swarm grows larger localization will become significantly slower and memory might overflow. 
Moreover, hardware requirements to carry out the algorithm are equally important. It is best if the hardware requirements are as simple as possible, because complex hardware increases the failure probability in large-scale systems.
%and expensive hardware may pose financial difficulty to the deployment of large-scale systems and a flexible algorithm allows one to upgrade hardware with minimal change the code. 
For better comparison, representative papers are analyzed in the aspect of asymptotic time and memory complexity and convergence behavior in Table \ref{tab:compinh} and in the aspect of their requirements for hardware in Table \ref{tab:compreq}.

\begin{table}[htbp]
	\begin{center}
		\begin{tabular}{c|ccccc}
		\hline \hline
		Algorithm                         & Method       & \tabincell{c}{Distri-\\buted?} & Time           & Memory     & \tabincell{c}{Local\\minima} \\ \hline
%		Ours                              & Optimization & \checkmark   & $O(l)$         & $O(1)$     & none         \\
		\cite{rubenstein2014programmable} & Multi-hop    & \checkmark   & $O(l)$         & $O(1)$     &              \\
		\cite{gilpin2008miche}            & Multi-hop    & \checkmark   & $O(l)$         & $O(1)$     &              \\
		\cite{2dfaxmachine,pillai20063d}  & Multi-hop    & \checkmark   & $O(l)$         & $O(1)$     &              \\
		\cite{savvides2002bits}           & Multi-hop    & \checkmark   & $O(l)$         & $O(1)$     &              \\
		\cite{Luft2018recursive}          & Multi-hop    & \checkmark   & $O(l)$         & $O(1)$     &              \\
		\cite{rabaey2002robust}           & Multi-hop    & \checkmark   & $O(l^2)$       & $O(N)$     &              \\
		\cite{kim2018cooperative}         & Multi-hop    & \checkmark   & $O(l)$         & $O(1)$     &              \\
		\cite{Moore2004robust}            & Multi-hop    & \checkmark   & $O(N)$         & $O(1)$     &              \\
		\cite{cornejo2013scale}           & Multi-hop    & \checkmark   & $O(l)$         & $O(1)$     &              \\
		\cite{shang2003localization}      & Optimization & $\circ$      & $O(N^3)^*$     & $O(N^2)^*$ & none         \\
		\cite{biswas2006semidefinite}     & Optimization & $\circ$      & $O(N^6)$       & $O(N^4)$   & possible     \\
		\cite{fan2015d3d}                 & Optimization & $\circ$      & $O(kN^2)$      & $O(N/k)$     & possible     \\
		\cite{funiak2009distributed}      & Optimization & $\circ$      & $\approx O(N)$ & $O(1)$     & unlikely     \\ \hline \hline
		\end{tabular}
		\caption{ }
		\vspace{-4mm}Comparison of various algorithms' inherit property. While evaluating asymptotic time complexity of algorithms, let $N$ represent the number of robots in the swarm, $l$ represent the 1D span of the swarm (e.g. $l\approx\sqrt{N}$ in 2D scenarios), and let $k$ represent the number of beacons in the swarm. \checkmark means the algorithm is distributed, and $\circ$ means the algorithm can work in with both centralized and distributed theme. The problem of local minima only exists in optimization-based algorithms which used gradient descent. Asymptotic time and memory complexity of algorithm presented in \cite{shang2003localization} are calculated based on the centralized themes, and asymptotic time complexity for algorithm presented in \cite{funiak2009distributed} is estimated using the simulation results provided in the paper. 
		\vspace{-3mm}
	\end{center}
	\label{tab:compinh}
	\end{table}

	\begin{table}[htbp]
	\begin{center}
	\begin{tabular}{c|ccccc}
		\hline \hline
		Algorithm                             & Topology\hspace{-1mm}   & \hspace{-1mm}Direction\hspace{-1mm}  & \hspace{-1mm}Distance\hspace{-1mm}   & \hspace{-1mm}Beacons\hspace{-1mm}    & \hspace{-1mm}\tabincell{c}{Direct\\Comm.} \\ \hline
%		Ours@config 1$^*$                          & \checkmark & $\circ$    & $\circ$    & $\circ$    & \checkmark   \\
%		Ours@config 2$^*$                           & $ \times $ & $ \times $ & $ \times $ & $ \times $ & $ \times $   \\
		\cite{rubenstein2014programmable}$^*$ & \checkmark & $ \times $ & $ \times $ & multiple   & \checkmark   \\
		\cite{gilpin2008miche}$^*$            & \checkmark & \checkmark & $ \times $ & single     & \checkmark   \\
		\cite{2dfaxmachine,pillai20063d}$^*$  & \checkmark & \checkmark & $ \times $ & single     & \checkmark   \\
		\cite{savvides2002bits}               & \checkmark & $ \times $ & \checkmark & multiple   & \checkmark   \\
		\cite{Luft2018recursive}              & \checkmark & $\circ$    & \checkmark & $\circ$    & \checkmark   \\
		\cite{rabaey2002robust}               & \checkmark & $ \times $ & \checkmark & multiple   & \checkmark   \\
		\cite{kim2018cooperative}             & \checkmark & $ \times $ & \checkmark & multiple   & \checkmark   \\
		\cite{Moore2004robust}                & \checkmark & $ \times $ & \checkmark & multiple   & \checkmark   \\
		\cite{cornejo2013scale}               & \checkmark & \checkmark & $ \times $ & $ \times $ & \checkmark   \\
		\cite{shang2003localization}          & \checkmark & $ \times $ & $\circ$    & $ \times $ & \checkmark   \\
		\cite{biswas2006semidefinite}         & \checkmark & $ \times $ & \checkmark & multiple   & \checkmark   \\
		\cite{fan2015d3d}                     & \checkmark & $ \times $ & \checkmark & multiple   & \checkmark   \\
		\cite{funiak2009distributed}          & \checkmark & \checkmark & $ \times $ & $ \times $ & \checkmark   \\ \hline \hline
		\end{tabular}
		\caption{ }
		\vspace{-4mm}Comparison of various algorithms' requirements for hardware. \checkmark means the algorithm requires the corresponding information in order to work, $\circ$ means the algorithm can work with or without this information, and $\times$ means the algorithm can not accept this information. Algorithms presented in \cite{rubenstein2014programmable,gilpin2008miche,2dfaxmachine,pillai20063d} require robots to form certain lattice structure with fixed distance between adjacent robots. %Extra requirements are demanded by algorithms presented in \cite{rubenstein2014programmable,gilpin2008miche,2dfaxmachine,pillai20063d}. Configuration 1 of our algorithm cannot work in cases when distance between robots can be sensed but not the direction. Hardware configuration 2 of our algorithm additionally requires that each robot can sense ambient light strength and emit anisotropic light strength. 
		\vspace{-9mm}
	\end{center}
	\label{tab:compreq}
	\end{table}

It could be seen that multi-hop algorithms are more demanding in beacons, but have better asymptotic time and memory complexity, while optimization based ones are just the opposite. Few algorithm combines low asymptotic time and memory complexity, low requirement for hardware and the convenience of beacon-less. Furthermore, to the best knowledge of the authors, current algorithms in both categories all require explicit communication between robots, which means robots need to have means to identify neighboring robots and send message to and/or receive messages from neighboring robots. This demands higher hardware capability and more complex code compared with algorithms which do not require explicit communication between robots.

%It could be seen that in these aspects, the proposed algorithm achieved high performance, demanded one of the lowest requirement for sensors, and can be used with multiple hardware configurations. Especially, the proposed algorithm is the only localization algorithm authors are aware of that do not require explicit communication between robots, which makes it easy to implement on robot systems that do not have wireless communication ability. Despite these advantages, this work also serves as a attempt to figure out what is the minimum requirement for robots to carry out localization.

Motivated by the challenges stated above, a novel approach to realize fully distributed localization based on the statistical mechanics of energy transition is proposed in this paper. The localization algorithm is inspired by observations and numerical simulations of microscopic particle system. It is known that in a system of microscopic particles, regardless of initial state, the particle distribution on all energy levels would eventually reach statistical equilibrium through constant transitions and the amount of particles on a state will reflect the energy of the state. Similarly, in VPE localization algorithm, each robot represents a virtual energy level(VEL), has a numerical value representing VP amount it owns, repetitively sends VPs to neighbors and receives VPs from neighbors, and finally acquires the localization result through VP amount it owns in the final state. 

In VPE localization algorithm, because each robot will represent a VEL and exchange VPs with other robots with the same rules, it is inherently fully distributed and can work with homogeneous swarm. Because each robot only needs to remember how much VPs it owns at present, memory requirement is minimal. The algorithm can function well without beacons by design, since the particle exchange process is only related to the relative position of robots and does not need to follow any specific order. However it can also work with existing beacons to increase localization accuracy (see explanation in Section \ref{sec:basealg}). Because of a meticulous mathematical coincidence, VPE localization algorithm can be carried out without each robot knowing relative position of neighboring robots and even without direct communication between robots.

In particular, this paper contributes in the following directions:

\begin{itemize}

\item VPE localization algorithm is fully distributed, and all robots carrying out VPE localization algorithm are indistinguishable, anonymous and work in a non-communication way. Robots do not need to distinguish and communicate directly with neighbors to exchange VPs, alternatively, they can also use simple light sensors to sense and calculate VPs. By contrast, almost all previous works assumed that robots can directly communicate to exchange arbitrary data with other robots and can distinguish other robots. Therefore, our algorithm is less demanding for hardware and is more resistant to addition or removal of robots. In addition, this work also has theoretical value by presenting an example of generating specific swarm behaviors (localization and shape formation) without direct communication between individuals.

\item VPE localization algorithm is stable and has low asymptotic time and memory complexity. Each robot would do the same amount of calculation in each iteration regardless of the swarm size, which is suitable for large scale swarms. Theoretical derivation proves that the VPE localization algorithm always converges and the relationship of the localization result with the initial state only includes a global translational factor determined by initial total VP amount. Furthermore, in a simplified case where robots are positioned on a rectangular grid and can communicate with immediate neighbors, asymptotic time complexity of VPE localization algorithm is proved to be $O(l)$ where $l$ is the 1D dimension of the swarm. Among all representative algorithms in Table \ref{tab:compinh}, only pure multi-hop methods have asymptotic time complexity of $O(l)$.
Moreover, local minimal problem \cite{funiak2009distributed} and complicated alignment process in divide-and-conquer methods will not appear in VPE localization algorithm.

\item The algorithm has no requirements for internal or external beacons to embody the origin of coordinate system. Instead, in VPE localization algorithm, the origin of coordinate system will automatically locate near the center of the swarm. This feature means that all robots in the localization can be homogeneous, which simplifies the deployment of robot swarms. To the best of the authors' knowledge, this feature is not presented in all previous works concerning distributed localization.

\item The algorithm can be executed with minimal hardware resources and is flexible as well. In Section \ref{sec:VPEcode} we present a way to execute VPE localization algorithm without explicitly detecting neighboring robots' position or bearing or even without identifying neighbours. However in Section \ref{sec:basealg} we also show that VPE localization algorithm can work with robots with communication ability and can utilize beacons as well as relative direction and distance information between robots to generate more accurate localization result.

\item Localization and shape formation experiments are carried out on 52 self-designed low-cost robots, proving that the proposed localization algorithm is suitable for real world applications. 
%Only few papers available include experimental result obtained on actual robot swarms.
\end{itemize}

The paper is organized as follows. Section \ref{sec:VPEmethod} presents general VPE localization algorithm as well as a modified version which does not require direct communication between robots along with detailed pseudo-code and hardware requirements. Section \ref{sec:anaVPEmethod} demonstrates theoretical derivation about the convergence and asymptotic time complexity of proposed localization algorithm. To demonstrate the performance of VPE localization algorithm in simulations and on robot swarms, Sections \ref{sec:locresult} and \ref{sec:sfresult} present results in localization and shape formation simulations and experiments. Section \ref{sec:discussion} addresses several issues regarding performance of VPE localization algorithm and hardware setups. Section \ref{sec:conclusion} concludes the work. Additionally, we also recorded a video for better illustration\footnote{Video can be found at \url{https://www.youtube.com/watch?v=XxEpmcOvr18} or \url{https://www.bilibili.com/video/av93915568/}.} and open-sourced our code for simulation\footnote{Code written in Mathematica could be downloaded at \url{https://github.com/wjxway/VPE-localization-algorithm}}.

Throughout this paper, for simplicity of notation, $\bm{0}_n$ and $\bm{1}_n$ will denote zero column vectors and one column vectors with dimension $n$ respectively, $\bm{0}_{m\times n}$ will denote zero matrices with $m$ rows and $n$ columns, $\bm{I}_n$ will denote identity matrices of rank $n$, $\hat{\bm{x}}$ and $\hat{\bm{r}}$ will denote the unit vector along $x$ axis and vector $\vec{\bm{r}}$ respectively, $\bm{U}^\mathrm{T}$ will denote the transpose of vector or matrix $\bm{U}$, $x\propto y$ means $x$ is proportional to $y$, $\lfloor x\rfloor$ will denote the largest integer smaller or equal to $x$, $\vec{\bm{u}}\cdot\vec{\bm{v}}$ will denote the scalar product of vector $\vec{\bm{u}}\text{ and }\vec{\bm{v}}$, $\|\bm{v}\|$ will denote $\sqrt{\bm{v}^\mathrm{T}\bm{v}}$, and $\mathop{\mathrm{diag}}\{a_1,a_2,\cdots,a_l\}$ will denote a diagonal matrix of dimension $l\times l$ with $a_1, \cdots, a_l$ as its diagonal element.

\section{Design of VPE localization algorithm}\label{sec:VPEmethod}
In this section, we first explain the physical principle of the VPE method, and then propose a general algorithm of distributed localization using VPE method.
\subsection{Physical basis of VPE localization algorithm}\label{sec:phy}
In the most probable state of a closed system with constant temperature $T$ and is composed of identical near-independent traditional particles (i.e. Nitrogen molecules in a room), amount of particles on different energy levels follows the Boltzmann distribution

\vspace{-2mm}
\begin{equation}
	n_i \propto e^{-\beta E_i}
	\label{eqn:1}
\end{equation}
\vspace{-5mm}
%It is known that in a regular ensemble of identical near-independent particles with temperature $T$, regardless of initial state, the distribution of particles among all energy levels would eventually reach statistical equilibrium through constant transitions \cite{statistical}. In the equilibrium state, particle distribution on different energy levels follows the Boltzmann distribution.

%\begin{equation}
%	n_i \propto e^{-\beta E_i}
%	\label{eqn:1}
%\end{equation}

\noindent where $n_i$ is the number of particles on a specific state $i$ with energy $E_i$, and $\beta=\frac{1}{\mathrm{k_B}T}$ where $\mathrm{k_B}$ is Boltzmann constant. This equation shows that particles tend to stay in states with lower energy. Describing it in the opposite way, it can also be said that the amount of particles in the state reflects the energy of the state.
In order to observe the evolution of such systems, Markov chain Monte Carlo (MCMC) simulations are often used \cite{Berg2004Markov}, in which a large number of particles are spawned and allowed to transit from a state to another with a certain probability. Eventually, when equilibrium is reached, the distribution of particles will follow a certain distribution which, in this case, is the Boltzmann distribution. It is obvious that a transition probability $P$ in the form of

\vspace{-2mm}
\begin{equation}
	P \propto e^{-\frac{\beta \Delta E}{2}}
	\label{eqn:2}
\end{equation}
\vspace{-5mm}

\noindent can lead to Boltzmann distribution, where $\Delta E$ denotes the energy required in a particle's transition, in other words, the difference in energy when a particle stays in the state after transition and before transition.

The relation between Equation \ref{eqn:1} and \ref{eqn:2} is intriguing. While the transition probability $P$ of particles depends solely on $\Delta E$ which is the \textit{relative} energy difference, when the particle distribution reaches equilibrium state, the amount of particles in a specific state will indicate \textit{global} information of the energy level of state $E_i$.

Mimicking the behavior of particles in the thermal dynamic system, we let each robot correspond to a virtual energy level (VEL) which holds a certain amount of virtual particles (VPs) and let VPs transit from a VEL to another through communications between robots with transition probability similar to (\ref{eqn:2}) but replace the energy $E_i$ with the robots' $x$ coordinate $x_i$. We can expect that in the equilibrium state, the distribution of VPs will be related to the $x$ coordinates of robots. In this way, a robot can determine its relative position in the swarm by the amount of VPs it owns in the equilibrium state. We name this as Virtual Particle Exchange (VPE) localization algorithm.

\subsection{Localization based on VPE localization algorithm}\label{sec:basealg}

%VPE method is a method in which each robot has a numerical value representing VP amount it owns, repetitively sends VPs to neighbors and receives VPs from neighbors, and finally acquires desired information (e.g. localization result) through VP amount it owns in the final state.

When trying to perform localization using VPE localization algorithm, a orthogonal coordinate system should be constructed first, and then each components of robots' position can be determined separately by executing VPE localization algorithm. For example, in a scenario when robots are distributed on a 2D plane, each robot should first agree on a common $x$ and $y$ direction (possibly using compass), then run VPE localization algorithm twice to determine the $x$ and $y$ components of its location.

Writing down the algorithm described in Section \ref{sec:phy} in detail, we get the general process of VPE localization algorithm which consists of three main steps:

\begin{enumerate}
	\item [(1)] Each robot is given a certain amount of VPs.
	\item [(2)] Robots exchange VPs repetitively according to the amount of VPs they own and the relative displacement.
	\item [(3)] Robots extract desired results from the amount of VPs they own in the equilibrium state.
\end{enumerate}

\noindent In further discussion we name the second step as VPE process, which is the most important step in VPE localization algorithm.\\

Mathematically speaking, in VPE process the configuration of robot swarm could be described by an undirected weighed graph $G(\bm{V},\bm{\Gamma})$, where $\bm{V}$ is the set of all $l$ robots (vertexes) and $\bm{\Gamma}$ is the weighted adjacency matrix with its element $\bm{\Gamma}_{i,j}$ representing how closely robot $i$ is connected to robot $j$. $\bm{\Gamma}_{i,j}$ could be relevant to connection topology, hardware and software settings, and relative distance between robots. For example, In Section \ref{sec:VPEcode}, $\bm{\Gamma}_{i,j}$ represents the intensity of light sensed by robot $j$ when robot $i$ is emitting isotropic light of unit intensity. In this case, due to the reversibility of light, $\bm{\Gamma}_{i,j}=\bm{\Gamma}_{j,i}$ holds, thus $G$ is an undirected weighed graph.

To describe the state of the system, a series of vector $\bm{\xi}^{(n)}$ are used, where superscript $n$ stands for iteration count (which can be omitted if unnecessary). Element $i$ in vector $\bm{\xi}^{(n)}$ ($\bm{\xi}_i^{(n)}$) represents the amount of VP robot $i$ owns. Using previous definitions, the VPE localization algorithm is given in Algorithm \ref{alg:1}.

\begin{breakablealgorithm}\label{alg:1}
\caption{VPE localization algorithm}
	\begin{algorithmic}[1]
	\Require Topology of robot swarm $G(\bm{V},\bm{\Gamma})$, relative distance between robots $\vec{\bm{r}}_{i,j}=\vec{\bm{r}}_{j}-\vec{\bm{r}}_{i}$.
	\Ensure {$x$ component of localization result, $\bm{\chi}_{i,x}$, for all robots.}
	\Notation
	\Notationx{$n_{m}$ - iterations to calculate (predefined).}
	\Notationx{$\mathrm{k}_0$ - constant controlling the speed of particle transition.}
	\Notationx{$\mathrm{k}$ - constant controlling the anisotropical distribution of VP transition.}
	%\Notationx{$C$ - function which governs the transition of VPs.}
	%\Notationx{$R$ - function which converts VP amount to localization result.}
	\ForEach{robot $i\in \bm{V}$.}
	\LineComment{\textit{Initialization:}}
	\State $\bm{\xi}_i\gets1$
	\LineComment{\textit{VPE process:}}
	\For{$n=0\to n_{m}-1$}
		 \MyState{
		 \begin{eqnarray}
		     \bm{\xi}_i\hspace{-3mm}&\gets&\hspace{-3mm}\bm{\xi}_i+\sum_{j\neq i}\bm{\xi}_j\,P_{j,i}-\bm{\xi}_i\,P_{i,j}\label{eqn:VPevo}\\
		     \textbf{s.t.}\hspace{5mm}P_{i,j}\hspace{-3mm}&=&\hspace{-3mm}\bm{\Gamma}_{i,j}\,\mathrm{k}_0\,e^{-\mathrm{k}\vec{\bm{r}}_{i,j}\cdot\hat{\bm{x}}}\label{eqn:P}
		 \end{eqnarray}
		 
		 \noindent where $P_{i,j}$ represents the possibility for VPs owned by robot $i$ to transit to robot $j$ in this iteration.\label{alg:step3}
		 
		 %where $\Delta_{i \to j}$\bm{\Gamma}_{i,j} (\mathrm{k}_0\bm{\xi}_j^{(n)} e^{-\mathrm{k}\vec{\bm{r}}_{j\rightarrow i}\cdot\hat{\bm{x}}}-\mathrm{k}_0\bm{\xi}_i^{(n)} e^{-\mathrm{k}\vec{\bm{r}}_{i,j}\cdot\hat{\bm{x}}})
		 %where transition function $C$ is given by
		 %	
		 %\begin{equation}
		 %	 C(\vec{\bm{r}}_{i,j},\bm{\xi}_i)=\mathrm{k}_0\bm{\xi}_i^{(n)} e^{-\mathrm{k}\vec{\bm{r}}_{i,j}\cdot\hat{\bm{x}}}
		 %\label{eqn:3}
		 %\end{equation}
		 }
	\EndFor
	\LineComment{\textit{Result Extraction:}}
	\State $\bm{\chi}_{i,x}\gets -\frac{\ln\bm{\xi}_i}{2\mathrm{k}}$\label{alg:resext}
	\end{algorithmic}
\end{breakablealgorithm}

The similarity between VPE localization algorithm and physical particle transition is clear. In the initial state, all robots own 1 unit of VPs, and then VPs start transiting between robots (VELs) with probability $P_{i,j}=\mathrm{k_0}\bm{\Gamma}_{i,j} e^{-\mathrm{k}\vec{\bm{r}}_{i,j}\cdot\hat{\bm{x}}}$, which is (\ref{eqn:2}) with substitution $\Delta E=\frac{2\mathrm{k}}{\beta}\vec{\bm{r}}_{i,j}\cdot\hat{\bm{x}}$. Thus, the corresponding potential energy of a VP possessed by robot $i$ is $E_i=\frac{2\mathrm{k}}{\beta}x_i+\mathrm{E_0}$ where $x_i$ is the $x$ coordinate of robot $i$ and $\mathrm{E_0}$ is a constant. In the equilibrium state of such a system, as described in (\ref{eqn:1}), $\bm{\xi}_i\propto e^{-\beta E_i}\propto e^{-2\mathrm{k}x_i}$ and $x_i=-\frac{\ln{\bm{\xi}_i}}{2\mathrm{k}}+\mathrm{x}_0=\bm{\chi}_{i,x}+\mathrm{x}_0$ where $\mathrm{x_0}$ is a global constant representing the shift of the origin of the coordinate system. The equation takes the same form as the equation used in step \ref{alg:resext} in Algorithm \ref{alg:1}.

An illustration of VPE process is shown in Figure \ref{fig:illus1}, which shows that after a few iterations, VPs gather on robots with smaller $x$ coordinates and VP distribution reaches a equilibrium state in which the amount of VPs received and sent by a robot in each iteration equals. In the equilibrium state, the amount of VPs each robot owns follows the relationship $\bm{\xi}_i \propto -e^{2\mathrm{k}x_i}$ as previously described.

\begin{figure}[htbp]
\begin{center}
	\includegraphics[width=.5 \textwidth]{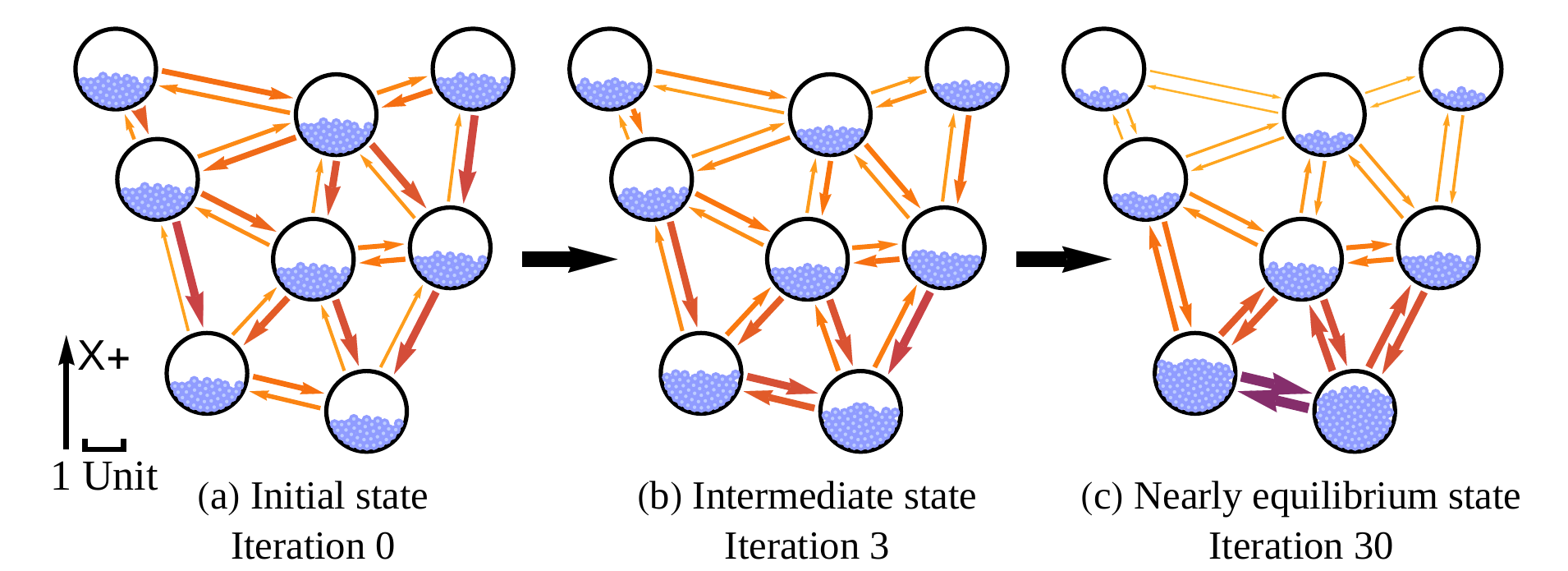}
	\vspace{-3mm}
	\caption
	{
		Illustration of different states in a VPE process. Each Black circle represents a robot unit positioned at the center of it and small blue circles inside represent VPs a robot owns. Robots can exchange VPs with robots in 6 units. The exchange of VPs is represented by arrows between robots, with the amount of VPs transferred reflected by the thickness of the arrow. Note that in real applications VP amounts are real numbers instead of integers.
	}
	\vspace{-3mm}
	\label{fig:illus1}
\end{center}
\end{figure}

In practice, in order to set the origin near the center of swarm, VPE process is run twice using opposite $x+$ direction and then the localization result is calculated using

\begin{equation}
	\bm{\chi}_{i,x}=\frac{\ln\bm{\xi}_{i,-}^{(f)}-\ln\bm{\xi}_{i,+}^{(f)}}{4\mathrm{k}}
\end{equation}

\noindent where $\bm{\xi}_{i,+}^{(f)}$ and $\bm{\xi}_{i,-}^{(f)}$ are VP amount in final states of two different VPE processes executed with opposite $x+$ direction. This method is used in all further simulations and experiments.

Algorithm \ref{alg:1} can be directly implemented in robot systems where each robot can obtain the relative position of other neighboring robots and can exchange data via one-to-one communication (configuration 1 in Table \ref{tab:compreq}), and can also work with beacons with pre-defined position by fixing the amount of VPs owned by beacon robots $i$ at $\bm{\xi}_i=\mathrm{exp}(-2\mathrm{k}\vec{\bm{r}}_{i}\cdot\hat{x})$. Though these two requirements can easily be met in modular robot swarm \cite{zykov2005robotics,belke2017mori,kurokawa2008distributed,romanishin20153d,gilpin2008miche}, for other swarms in which robots are scattered in 2D or 3D \cite{rubenstein2014programmable,Kushleyev2013,Preiss7989376,Vicsek6943105,Hauert6095129}, these two requirements can be too hard or too expensive to achieve.

Slight modification can be made to the algorithm to make it work with fewer available information and consume less hardware resource (details provided in the next subsection). In scenarios where robots are approximately evenly spaced, $\vec{\bm{r}}$ can be approximated by $\mathrm{r}_0\,\hat{\bm{r}}$ where $\mathrm{r}_0$ is the weighed average distance between connected robots. In this case, Equation \ref{eqn:P} and line \ref{alg:resext} in Algorithm \ref{alg:1} should be changed to

\vspace{-2mm}
\begin{eqnarray}
    P_{i,j}\hspace{-3mm}&=&\hspace{-3mm}\bm{\Gamma}_{i,j}\,\mathrm{k}_1\,e^{-\mathrm{k}\hat{\bm{r}}_{i,j}\cdot\hat{\bm{x}}}\\
    \bm{\chi}_{i,x} \hspace{-3mm}&=&\hspace{-3mm} -\mathrm{r}_0\frac{\ln\bm{\xi}_i}{2\mathrm{k}}\label{eqn:resultcalc}
\end{eqnarray}

\noindent respectively, where $\mathrm{k}_1$ is another constant controlling the speed of particle transition, just like $\mathrm{k}_0$. \textit{Further discussion is all based on this modified algorithm.}

\subsection{Detailed code and hardware requirements for each robot to achieve localization without direct communication}\label{sec:VPEcode}

Previous discussion focuses on the mathematical aspect of VPE localization algorithm, in which the swarm and the localization process is abstracted to a vector $\bm{\xi}$ and mathematical operations on it. However, how these operations are carried out by robots is an equally important problem, which determines whether the algorithm is distributed, whether the algorithm requires direct communication between robots, and sometimes the asymptotic time and memory complexity of the algorithm. In this section, hardware requirements of robots and pseudo-code executed by each robot are presented. By carrying out such code on every robot in the swarm synchronously, the whole swarm effectively executes the VPE localization algorithm in a distributed and direct communication-free way.

In order to achieve localization without direct communication between robots, we require that each robot possess following two abilities:

\begin{enumerate}
	\item [(i)] Sense ambient light intensity.
	\item [(ii)] Emit light with given intensity and angular distribution.
\end{enumerate}

\noindent These two requirements can easily be satisfied in systems based on unmanned cars, as it only requires an extra emitter-receiver ring mounted on each robot. A prototype of the ring is shown in Figure \ref{fig:indivrobot}.

In the localization process, instead of identifying other robots or sending digital signal to other robots, robots change the light intensity distribution in their vicinity and execute localization based on the variation of ambient light intensity they sensed. Thus this algorithm has no need for direct communication between robots. To some extent, we can even consider VPE localization algorithm as a sensing-based algorithm.

The key is how can Algorithm \ref{alg:1}, especially line \ref{alg:step3}, be realized using just the two abilities mentioned above. It is not hard to imagine that the process of sending VPs to other robots with amount defined by $\bm{\xi}_i\,P_{i,j}$ can be achieved by emitting light with angular distribution

\vspace{-2mm}
\begin{equation}
    I_1 (\hat{\bm{r}})=\bm{\xi}_i\;\mathrm{k}_1 e^{-\mathrm{k} \hat{\bm{r}}\cdot\hat{\bm{x}}}
\end{equation}
\vspace{-5mm}

\noindent and then letting robots sense the change in ambient light (for clarity, we name this process as `original process'). The change in ambient light $s_i$ sensed by robot $i$ is just the sum of contribution of all robots, which can be expressed by 

\vspace{-2mm}
\begin{equation}
s_i=\sum\limits_{j\in \bm{V}} \bm{\Gamma}_{j,i}\,\bm{\xi}_j\,\mathrm{k}_1 \,e^{-\mathrm{k} \hat{\bm{r}}_{j,i}\cdot\hat{\bm{x}}}=\sum\limits_{j\in \bm{V}} \bm{\xi}_j\,P_{j,i}
\label{eqn:recv}
\end{equation}

\noindent which is exactly what we wanted.

However, problem arises because robots cannot directly communicate with other robots: robots cannot know how many VPs they send to other robots in an iteration. Fortunately, this can be solved by introducing an additional process where robots emit light with angular distribution

\vspace{-2mm}
\begin{equation}
    I_2(\hat{\bm{r}})=\mathrm{k}_2 e^{\mathrm{k} \hat{\bm{r}}\cdot\hat{\bm{x}}}
\end{equation}

\noindent and then let robots sense the change in ambient light. In the equation, $\mathrm{k}_2$ is a constant controlling the intensity of light emitted. Similarly, the change in ambient light $c_i$ sensed by robot $i$ can be expressed as follows

\vspace{-2mm}
\begin{equation}
    c_i=\sum\limits_{j\in \bm{V}}\bm{\Gamma}_{j,i}\,\mathrm{k}_2\,e^{\mathrm{k} \hat{\bm{r}}_{j,i}\cdot\hat{\bm{x}}}
\end{equation}

%\noindent Keeping in mind that $(s_1)_i$ represents the VP amount received by robot $i$ in an iteration, the amount of VPs other robots received from robot $i$ in an iteration is the summation of terms in $(s_1)_q\ (q\in V)$ with summation index $j=i$. The summation can be written as:

\noindent Regarding $c_i$ we have the following relationship

\vspace{-2mm}
\begin{equation}
    \frac{\mathrm{k}_1 \bm{\xi}_i}{\mathrm{k}_2}\,c_i = \mathrm{k}_1 \bm{\xi}_i\,\sum\limits_{j\in \bm{V}} \bm{\Gamma}_{j,i} e^{\mathrm{k} \hat{\bm{r}}_{j,i}\cdot\hat{\bm{x}}} = \sum\limits_{j\in \bm{V}} \bm{\xi}_i\,P_{i,j}
    \label{eqn:send}
\end{equation}

\noindent Surprisingly, the right hand side represents the amount of VPs robot $i$ sends to other robots. Adding $\bm{\xi}_i$ on both sides of Equation \ref{eqn:recv} and subtracting Equation \ref{eqn:send} produces

\vspace{-4mm}
\begin{equation}
\begin{split}
    (1-\frac{c_i\,\mathrm{k}_1}{\mathrm{k}_2})\bm{\xi}_i+s_i=\bm{\xi}_i+\sum_{j\in \bm{V}}\bm{\xi}_j\,P_{j,i}-\bm{\xi}_i\,P_{i,j}\\=\bm{\xi}_i+\sum_{j\neq i}\bm{\xi}_j\,P_{j,i}-\bm{\xi}_i\,P_{i,j}
\end{split}
\label{eqn:VPsense}
\end{equation}

The right hand side of this equation is Equation \ref{eqn:VPevo} in Algorithm \ref{alg:1}, while the left hand side is only dependent to light intensity sensed by robot $i$ and amount of VPs owned by robot $i$. Equation \ref{eqn:VPsense} shows that robots can carry out localization using VPE localization algorithm without direct communication with any other robots if they have the two abilities listed above. For better comprehension, an illustration of the relationship between the additional process and the original process is given in Figure \ref{fig:illusbroadcast}, and a detailed pseudo-code executed by each robot is described in Algorithm \ref{alg:2}.

\begin{figure}[hbtp]
	\begin{center}
		\includegraphics[width=0.45\textwidth]{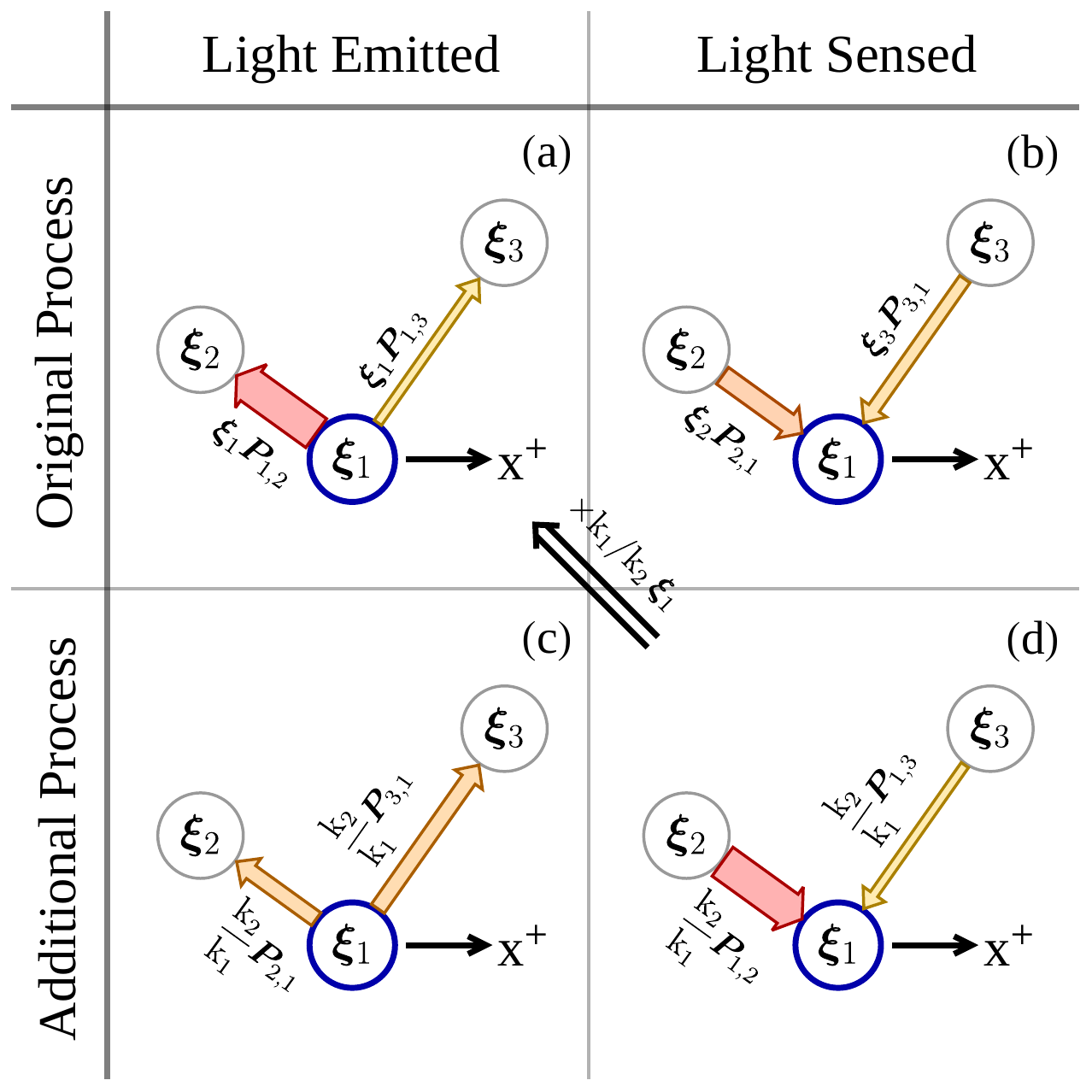}
		\caption{Illustration of robots' communication pattern. In the figure, each robot is represented by a circle and the robot we would like to focus on is colored in blue, the value inside each circle represents VP amount each robot owns and the value next to each arrow represents the contribution of the robot on the tail of the arrow to the light intensity measured by the robot on the head of the arrow. The figure shows that although a robot can only measure the light intensity it received, i.e. the second column, through the introduction of the additional process, a robot can obtain both the information of how much VPs it sends and how much VPs it receives in an iteration due to the similarity of light emitted in the original process (subfigure a) and light sensed in the additional process (subfigure d).}
		\vspace{-2mm}
		\label{fig:illusbroadcast}
	\end{center}
\end{figure}

%the broadcast communication architecture is more suitable for unmanned cars or unmanned aerial vehicles (UAVs) which can move on a 2D plane or 3D space freely and can broadcast and sense analog signals.

\begin{breakablealgorithm}\label{alg:2}
	\caption{Pseudo-code for localization in $x$ axis}
	\begin{algorithmic}[1]
		\Require All robots:
		\begin{enumerate}
			\item [a.] Share a common $x$ direction.
			\item [b.] Can emit light with anisotropic angular distribution and sense ambient light intensity.
		\end{enumerate}
		\Ensure Localization result $\chi_x$ for all robots.
		\Notation
		\Notationx{$\xi_{+}, \xi_{-}$ - floating point numbers representing the amount of VPs possessed by a robot.}
		\Notationx{$n_{m}$ - iterations to calculate (predefined).}
		\Notationx{$\hat{\bm{r}}$ - light emitting direction.}
		\Notationx{$\mathrm{k}_1, \mathrm{k}_2$ - constants controlling the intensity of light emitted.}
		\Notationx{$\mathrm{k}$ - constant controlling the anisotropic distribution of light emitted.}
		\Notationx{$\mathrm{r}_0$ - weighed average distance between connected robots.}
		\ForEach{robot $i\in \bm{V}$.}
		\State $\xi_{+}\gets1$\label{alg:a2st}
		
		\MyState{Begin emitting light with angular distribution $I_2(\hat{\bm{r}})=\mathrm{k}_2 e^{\mathrm{k} \hat{\bm{r}}\cdot\hat{\bm{x}}}$}\label{alg:sst}
		
		\State $c\gets$ sensed ambient light intensity.
		\State Stop emission.\label{alg:sed}
		
		\For{$n=0\to n_m-1$}
			\State Time synchronization.
			\State Possible addition: \textbf{repeat} line \ref{alg:sst}-\ref{alg:sed}.
			\MyState{Begin emitting light with angular distribution $I_1 (\hat{\bm{r}})=\xi_{+}\;\mathrm{k}_1 e^{-\mathrm{k} \hat{\bm{r}}\cdot\hat{\bm{x}}}$.}			
			\State $s\gets$ sensed ambient light intensity.
		    \State Stop emission.
			\State $\xi_{+}\gets\left( 1-\frac{c\mathrm{k}_1}{\mathrm{k}_2} \right) \xi_{+} + s$
			\MyState{Possible addition: calibrate total VP amount using code given in Algorithm \ref{alg:calib} in appendix.}\label{alg:sst1}
		\EndFor\label{alg:a2ed}
		
		\MyState{\textbf{repeat} line \ref{alg:a2st}-\ref{alg:a2ed} with inverted $x$ axis and store the result as $\xi_{-}$.}
		\State $\chi_x\gets\mathrm{r}_0 \frac{\ln \xi_{-}-\ln\xi_{+}}{4\mathrm{k}}$.
	\end{algorithmic}
\end{breakablealgorithm}

\section{Analysis of VPE localization algorithm}\label{sec:anaVPEmethod}

VPE localization algorithm is analyzed in two aspects in this section: convergence and asymptotic time complexity. Theoretical derivation suggests that VPE localization algorithm will always converge to the same result regardless of initial state. Furthermore, the asymptotic time complexity of VPE localization algorithm applying on rectangular lattice is $\Theta(l)$ where $l$ represents the 1D size of the lattice.

\subsection{Convergence of VPE localization algorithm}

In following discussion, we consider a VPE localization algorithm applied on a connected homogeneous robot swarm consisting of $l$ robots under following two assumptions:

\begin{assum}\label{assum:1}
	\rm Robots can be considered as stationary in a localization process.
\end{assum}
\begin{assum}\label{assum:2}
	\rm All robots send and receive VP synchronously in VPE process.
\end{assum}

\begin{remark}
	\rm
	Assumption \ref{assum:1} is satisfied if robots execute Algorithm \ref{alg:2} much faster than they move or if robots take steps to update their positions and keep stationary when localization is in progress.
\end{remark}

The proof is based on five important properties of VPE localization algorithm:

\begin{enumerate}
	\item [(i)] In the initial state, VP amount on all robots are positive.
	\item [(ii)] Amount of VPs transmitted from robot $i$ to robot $j$ in an iteration is never negative.
	\item [(iii)] Amount of VPs transmitted from robot $i$ to robot $j$ in an iteration is linearly correlated to $\bm{\xi}_i$.
	\item [(iv)] A robot will not give all its VPs to neighbors in an iteration.
	\item[(v)] The swarm is connected.
\end{enumerate}

One can verify that properties (i,ii,iii,v) are automatically satisfied by Algorithm \ref{alg:2}, and property (iv) will be satisfied if for all robot $i$,

\vspace{-3mm}
\begin{equation}
	\sum_{j\neq i}P_{i,j}<1
	\label{eqn:crit}
\end{equation}

\begin{theorem}
VPE localization algorithm will always converge if Assumption \ref{assum:1} and \ref{assum:2} and Equation \ref{eqn:crit} hold. Furthermore, the localization result will converge to a point which is relevant to the configuration of robot swarm, only shift as a whole when the total amount of VPs changes, and is irrelevant to the distribution of VPs in the initial state.
\end{theorem}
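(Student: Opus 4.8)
The plan is to recognize the VPE iteration (\ref{eqn:VPevo}) as a linear, VP-conserving recursion of Markov type and then invoke the Perron--Frobenius theorem. First I would rewrite one synchronous round of (\ref{eqn:VPevo}) in matrix form $\bm{\xi}^{(n+1)}=\bm{M}\bm{\xi}^{(n)}$, where $\bm{M}_{i,i}=1-\sum_{j\neq i}P_{i,j}$ and $\bm{M}_{i,j}=P_{j,i}$ for $i\neq j$; linearity in $\bm{\xi}$ (property (iii)) is exactly what makes this matrix description legitimate, and Assumptions \ref{assum:1} and \ref{assum:2} are what allow a single fixed $\bm{M}$ to govern the whole process. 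Because every VP removed from robot $i$ through the term $\bm{\xi}_iP_{i,j}$ is added to robot $j$, each column of $\bm{M}$ sums to $1$, i.e.\ $\bm{M}$ is column-stochastic, and hence the total VP amount $S=\bm{1}_l^\mathrm{T}\bm{\xi}^{(0)}$ is invariant along the iteration; by property (i), $S>0$.

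Next I would check the hypotheses of Perron--Frobenius for $\bm{M}$. Nonnegativity of the off-diagonal entries is property (ii); strict positivity of every diagonal entry $\bm{M}_{i,i}$ is precisely the content of (\ref{eqn:crit}) (property (iv)); and since $\bm{\Gamma}$ is symmetric, $P_{i,j}>0\iff P_{j,i}>0\iff\bm{\Gamma}_{i,j}>0$, so the directed graph underlying $\bm{M}$ coincides with the connected interaction graph (property (v)), making $\bm{M}$ irreducible. An irreducible nonnegative matrix with a strictly positive diagonal is primitive, so $\bm{M}$ has $1$ as a simple eigenvalue of maximal modulus, a strictly positive right eigenvector $\bm{\pi}$ (normalized by $\bm{1}_l^\mathrm{T}\bm{\pi}=1$), left eigenvector $\bm{1}_l^\mathrm{T}$, and $\bm{M}^n\to\bm{\pi}\,\bm{1}_l^\mathrm{T}$ as $n\to\infty$. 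Consequently $\bm{\xi}^{(n)}=\bm{M}^n\bm{\xi}^{(0)}\to S\,\bm{\pi}$, which establishes convergence and shows the equilibrium vector depends on $\bm{\xi}^{(0)}$ only through the scalar $S$.

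Finally I would push this limit through the result-extraction step. Since $\bm{\pi}>0$ and $S>0$, the logarithm is defined, and $\bm{\chi}_{i,x}=-\mathrm{r}_0\frac{\ln\bm{\xi}_i^{(\infty)}}{2\mathrm{k}}=-\mathrm{r}_0\frac{\ln\pi_i}{2\mathrm{k}}-\mathrm{r}_0\frac{\ln S}{2\mathrm{k}}$: the first term is fixed by $\bm{M}$, hence by the swarm configuration and the chosen direction $\hat{\bm{x}}$, while the second is the \emph{same} additive constant for every robot and is controlled solely by the initial total VP amount. Thus the localization result is independent of how the VPs are initially distributed and merely translates rigidly when $S$ is rescaled, which is exactly the claim; applying the same argument to both runs of the two-direction variant shows the $S$-dependent shift cancels, leaving a purely configuration-dependent offset.

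I expect the main obstacle to be the careful justification that connectedness of the swarm yields primitivity of $\bm{M}$: one must use that the symmetry of $\bm{\Gamma}$ makes $P_{i,j}$ and $P_{j,i}$ share the same support (so an undirected connected interaction graph really does give an \emph{irreducible} matrix), and that positivity of the diagonal via (\ref{eqn:crit}) rules out periodicity. Everything afterwards is a direct citation of Perron--Frobenius plus bookkeeping with the column-stochastic structure, so the only genuinely delicate point is translating the combinatorial hypothesis into the spectral one.
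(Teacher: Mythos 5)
Your proposal is correct and follows essentially the same route as the paper: rewrite the synchronous update as $\bm{\xi}^{(n+1)}=\bm{T}\bm{\xi}^{(n)}$ with a column-stochastic, nonnegative, irreducible matrix having positive diagonal, apply the Perron--Frobenius theorem to get $\bm{\xi}^{(n)}\to\bm{v}\,(\bm{1}_l^\mathrm{T}\bm{\xi}^{(0)})$, and push the limit through the logarithmic extraction to isolate the configuration-dependent part from the uniform shift governed by the initial total VP amount. Your extra remark that the symmetry of $\bm{\Gamma}$ makes $P_{i,j}$ and $P_{j,i}$ share the same support is a welcome sharpening of the irreducibility step that the paper leaves implicit.
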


\begin{proof}
We should first convert theorem into a mathematical form. Assumptions \ref{assum:1}, \ref{assum:2}, and property (iii) mean VPE process could be considered as a series of linear transformations on VP vector described by

\vspace{-2mm}
\begin{equation}
	\bm{\xi}^{(n+1)}=\bm{T}\,\bm{\xi}^{(n)}
	\label{eqn:iter}
\end{equation}
\vspace{-5mm}

\noindent where

\vspace{-4mm}
\begin{equation}
\bm{T}\hspace{-.5mm}=\hspace{-.5mm}
\left(\hspace{-1.5mm}
\begin{array}{ccccc}
1\hspace{-1mm}-\hspace{-1mm}\sum\limits_{i=2}^{l}P_{1,i} & P_{2,1}& \cdots & P_{l,1}\\
P_{1,2} & 1\hspace{-1mm}-\hspace{-1mm}\sum\limits_{i=1,i\neq 2}^{l}P_{2,i}& \cdots & P_{l,2}\\
\vdots & \vdots & \ddots & \vdots\\
P_{1,l} & P_{2,l} & \cdots & 1\hspace{-1mm}-\hspace{-1mm}\sum\limits_{i=1}^{l-1}P_{l,i}
\end{array}
\hspace{-1.5mm}\right)
\label{eqn:Tform}
\end{equation}

\noindent is the VP transition matrix. Similarly, properties (ii) and (iv) can be interpreted as follows: for all $i\in \bm{V}$ and $j\in \bm{V}, j\neq i$,

\vspace{-2mm}
\begin{equation}
	\left\{
	\begin{aligned}[c]
	\bm{T}_{i,j}\geq0\\
	\bm{T}_{i,i}>0
	\end{aligned}
	\right.
	\label{eqn:tprop}
\end{equation}
\vspace{-2mm}

\noindent and property (v) means there is no permutation matrix $\bm{U}$ which satisfies

\vspace{-2mm}
\begin{equation}
\bm{U}\bm{T}\bm{U}^\mathrm{T}\hspace{-1mm}=\hspace{-1mm}
\left(\hspace{-2mm}
\begin{array}{cc}
\bm{Q} & \hspace{-2.5mm}\bm{R}\\
\bm{0} & \hspace{-2.5mm}\bm{S}
\end{array}\hspace{-2mm}
\right)
\label{eqn:noperm}
\end{equation}
\vspace{-2mm}

\noindent where $\bm{Q}$ and $\bm{S}$ are square matrices with non-zero dimension. Equivalently, this means matrix $\bm{T}$ is irreducible.

Because $\bm{T}$ is non-negative, irreducible and for all $i$, $\bm{T}_{i,i}>0$ which means the system is aperiodic, matrix $\bm{T}$ is primitive (i.e. there exists a natural number $m$ which lets $\bm{T}^m$ be all positive), thus Perron-Frobenius theorem could be applied, which indicates the following three properties:

\begin{enumerate}
	\item [(i)] the Perron–Frobenius eigenvalue (eigenvalue of matrix $\bm{T}$ with largest absolute value) $r$ satisfies

	\vspace{-2mm}
    \begin{equation}
        1=\min\limits_i\sum\limits_{j=1}^l| \bm{T}_{j,i}|\leq r\leq \max\limits_i\sum\limits_{j=1}^l| \bm{T}_{j,i}|=1
	\end{equation}
	\vspace{-3mm}
    
    \item [(ii)] The left and right eigenspace associated to $r$ are one-dimensional. In further discussion, $\bm{v}$ will denote the right eigenvector which satisfies $\sum_{i=1}^l\bm{v}_i=1$ and $\bm{w}^\mathrm{T}$ will denote $\bm{1}_l^\mathrm{T}$ which is evidently a left eigenvector of $\bm{T}$ and satisfies $\bm{w}^\mathrm{T}\bm{v}=1$.

    \item [(iii)] the limit of $\bm{T}^n$ satisfies
	
	\vspace{-2mm}
    \begin{equation}
        \lim\limits_{n\to \infty} \frac{\bm{T}^n}{r^n}=\bm{v}\bm{w}^\mathrm{T}
	\end{equation}
	\vspace{-5mm}
\end{enumerate}

\noindent Using all three properties, we can deduce that

	\vspace{-2mm}
    \begin{equation}
        \bm{\xi}^{(\infty)}=\lim\limits_{n\to \infty} \bm{T}^n \bm{\xi}^{(0)} =\bm{v}(\bm{w}^\mathrm{T} \bm{\xi}^{(0)})=\bm{v}\sum_{j=1}^l\bm{\xi}_j^{(0)}
        \label{eqn:VPinf}
    \end{equation}
	\vspace{-2mm}

%\noindent As shown in Equation \ref{eqn:VPinf}, in the equilibrium state, the amount of VPs owned by each robot is determined only by the configuration of robot swarm and the total amount of VPs in the initial state and is not related to the initial distribution of VPs. 

\noindent Plugging Equation \ref{eqn:VPinf} into Equation \ref{eqn:resultcalc}, the converging point of localization result is obtained as follows

\vspace{-2mm}
\begin{equation}
    \bm{\chi}_{i,x}=-\frac{\mathrm{r}_0}{2\mathrm{k}} (\ln\bm{v}+\ln(\sum_{j=1}^l\bm{\xi}_j^{(0)}))
\end{equation}
\vspace{-2mm}

\noindent which means the localization result will converge to a point which is relevant to the configuration of robot swarm, only shift as a whole when the total amount of VPs changes, and is irrelevant to the distribution of VPs in the initial state.
\end{proof}

It is worth noting that if Algorithm \ref{alg:1} is used, i.e. $\bm{P}_{i,j} =\mathrm{k_0} \bm{\Gamma}_{i,j} e^{-\mathrm{k}\vec{\bm{r}}_{i,j}\hat{\bm{x}}}$, then $\bm{v}=\dfrac{e^{-2\mathrm{k}x_i}}{\sum_{j=1}^l e^{-2\mathrm{k}x_j}}$, thus in the equilibrium state the localization result will satisfy $\bm{\chi}_{i,x}=x_i+(\ln(\sum_{j=1}^l\bm{\xi}_j^{(0)})-\ln(\sum_{j=1}^l e^{-2\mathrm{k}x_j}))$, which means the localization result will be accurate after a translation to align the centroid of localization result and the centroid of the swarm.

%distribution of VP will satisfy $\bm{\xi}^{(\infty)}_i\propto e^{-2\mathrm{k}x_i}$ which is exactly the result we expect.

\begin{remark}\label{rem:convergence}
	\rm
	By Observing the convergence behavior of $\bm{\xi}^{(n)}-\bm{\xi}^{(0)}$, one could notice that the difference converges with a rate dominated by $|\lambda_2|^n$ where $\lambda_2$ is the eigenvalue of $\bm{T}$ which has the second largest absolute value. This means that error converges linearly, and the rate of convergence is determined by $|\lambda_2|$.
\end{remark}

\begin{remark}\label{rem:calib}
	\rm
	Due to the exponential shrinking behavior of components corresponding to eigenvalues except for the Perron–Frobenius eigenvalue, the algorithm will be numerically stable and error resistant if a normalization is performed on $\bm{\xi}$ regularly to prevent changes of total VP amount (i.e. the component corresponding to Perron–Frobenius eigenvalue).
	
	A typical normalization method could be

	\vspace{-3mm}
	\begin{equation}
		\mathcal{N}(\bm{\xi}^{(n)})=\frac{l\,\bm{\xi}^{(n)}}{\sum_{j\in \bm{V}} \bm{\xi}_j^{(n)}}
	\end{equation}

	\noindent which keeps the VP amount per robot fixed at 1. Detailed pseudo-code for normalization is provided in Algorithm \ref{alg:calib} in appendix.
	
	Through regular normalization, the convergence of the entire algorithm under noise could be ensured. Experimental demonstration of the effectiveness of such normalization method against noise and drift is presented in Figure \ref{fig:calib}.
	
\end{remark}

\subsection{Asymptotic time complexity of VPE localization algorithm}\label{sec:timecomplexity}

In this section, a semi-quantitative analysis about the asymptotic time complexity of VPE localization algorithm is provided. Because robots update their position estimation iteratively in VPE localization algorithm and the time required to execute an iteration does not vary with the size of the swarm, in following discussion we will use iterations required to reach a certain accuracy as the measurement of execution time.

In following discussion, we mainly concern a scenario as illustrated in Figure \ref{fig:1ddetail}, where $l$ robots are distributed evenly on a 1D straight line with a gap of 1 unit and can only exchange VPs with immediate neighbors. Furthermore, the localization accuracy $\delta^{(n)}$ in a certain iteration $n$ is defined as the maximum discrepancy between the localization result in this iteration and the converging point of localization result, i.e.

\vspace{-2mm}
\begin{equation}
    \delta^{(n)}=\max\limits_i |\bm{\chi}_{i,x}^{(n)}-\bm{\chi}_{i,x}^{(\infty)}|
\end{equation}
\vspace{-3mm}

\noindent where $\bm{\chi}_{i,x}^{(n)}$ represents the localization result calculated with the VP distribution $\bm{\xi}^{(n)}$ in iteration $n$ using Equation \ref{eqn:resultcalc}, that is 

\begin{equation}
    \bm{\chi}_{i,x}^{(n)}=-\mathrm{r}_0\frac{\ln\bm{\xi}^{(n)}_i}{2\mathrm{k}}
\end{equation}

\begin{figure}[htbp]
	\begin{center}
		\includegraphics[width=.45\textwidth]{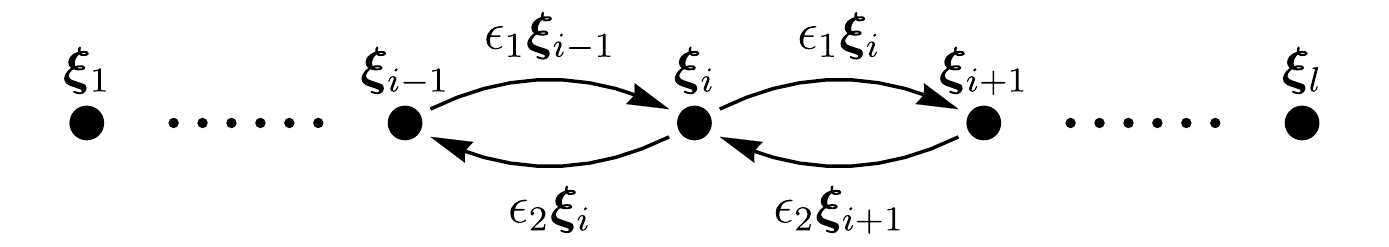}
		\caption{A simplified model where $l$ robots represented by black dots can exchange VPs with their immediate neighbors according to previously mentioned rules. $\mathrm{\epsilon}_1$ and $\mathrm{\epsilon}_2$ are two transition parameters.}
		\label{fig:1ddetail}
	\end{center}
\end{figure}

\begin{theorem}
In a 1D robot swarm described above, the iteration $n$ required to achieve an accuracy of $\delta^{(n)}\leq\delta_0\leq\frac{1}{2}$ is $\Theta(l)$ where $l$ is the number of robots.
\end{theorem}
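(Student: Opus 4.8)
The plan is to analyze the iteration matrix $\bm{T}$ for the 1D chain in Figure~\ref{fig:1ddetail} explicitly and extract its spectral gap. For this simplified model, $\bm{T}$ is tridiagonal: each interior robot keeps weight $1-\mathrm{\epsilon}_1-\mathrm{\epsilon}_2$ and sends $\mathrm{\epsilon}_1$ to its left neighbor and $\mathrm{\epsilon}_2$ to its right neighbor (where $\mathrm{\epsilon}_1 = \mathrm{k}_1 e^{-\mathrm{k}}$, $\mathrm{\epsilon}_2 = \mathrm{k}_1 e^{\mathrm{k}}$ up to the boundary corrections), so $\bm{T}$ is a perturbed path-graph Laplacian-type matrix. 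The key observation is that $\bm{T}$ is similar, via the diagonal conjugation $\bm{D} = \mathop{\mathrm{diag}}\{\rho^{0},\rho^{1},\dots,\rho^{l-1}\}$ with $\rho = \sqrt{\mathrm{\epsilon}_1/\mathrm{\epsilon}_2}$, to a \emph{symmetric} tridiagonal matrix with constant off-diagonal entries $\sqrt{\mathrm{\epsilon}_1\mathrm{\epsilon}_2}$ (again up to boundary terms). Symmetric tridiagonal Toeplitz matrices have explicitly known eigenvalues of the form $a + 2\sqrt{\mathrm{\epsilon}_1\mathrm{\epsilon}_2}\cos\!\big(\tfrac{m\pi}{l+1}\big)$; one then has to handle the two modified corner entries as a rank-two boundary perturbation, but the qualitative conclusion survives: the Perron eigenvalue is $1$, and the second-largest eigenvalue in absolute value satisfies $1-|\lambda_2| = \Theta(1/l^{2})$ because $1-\cos\!\big(\tfrac{\pi}{l+1}\big) = \Theta(1/l^{2})$.

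Next I would translate the spectral gap into an iteration count. By Remark~\ref{rem:convergence}, the deviation $\bm{\xi}^{(n)} - \bm{\xi}^{(\infty)}$ decays like $|\lambda_2|^{n}$ times a polynomial-in-$l$ prefactor coming from the norm of the spectral projector (the eigenvector condition number, which for this nearly-Toeplitz matrix grows at most polynomially in $l$). Since $\bm{\chi}^{(n)}_{i,x} = -\mathrm{r}_0 \tfrac{\ln \bm{\xi}^{(n)}_i}{2\mathrm{k}}$ and $\bm{\xi}^{(\infty)}_i$ is bounded away from $0$ on a chain of fixed spacing (it is $\propto \rho^{2i}$ which is $\geq c^{l}$ for a constant $c$ — this needs a little care, see below), the logarithm is Lipschitz near $\bm{\xi}^{(\infty)}$, so $\delta^{(n)} \leq \mathrm{poly}(l)\cdot|\lambda_2|^{n}$. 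Demanding this be $\leq \delta_0$ and solving for $n$ gives $n = O\!\big(\tfrac{\log(\mathrm{poly}(l)/\delta_0)}{1-|\lambda_2|}\big) = O(l^{2}\log l)$ — which is weaker than the claimed $\Theta(l)$, so the naive spectral bound is \emph{not} enough.

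The main obstacle, therefore, is sharpening $O(l^{2}\log l)$ down to $\Theta(l)$, and I expect this is where the real work lies. The resolution should come from observing that the relevant observable is not the full vector $\bm{\xi}^{(n)}$ in Euclidean norm but the \emph{localization error} $\delta^{(n)}$, which only sees the low-frequency content of the deviation, and from the fact that the process behaves like a biased nearest-neighbor random walk (drift $\mathrm{\epsilon}_2 - \mathrm{\epsilon}_1 \neq 0$ per step, assuming $\mathrm{k} \neq 0$). A biased walk on a chain of length $l$ has relaxation time $\Theta(l)$, not $\Theta(l^2)$, because the drift carries mass across the chain in linearly many steps and the stationary profile is geometric; one makes this rigorous by a direct estimate on $\bm{T}^{n}$ — e.g. bounding the entries of $\bm{T}^{n} - \bm{v}\bm{w}^{\mathrm{T}}$ by a hitting-time / coupling argument for the associated Markov chain, or by a generating-function analysis of the tridiagonal resolvent that isolates the $\Theta(1/l)$ (rather than $\Theta(1/l^2)$) effective gap seen by smooth test vectors. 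For the matching lower bound $\delta^{(n)} = \Omega(1)$ for $n = o(l)$, I would exhibit an explicit slow mode: after $n \ll l$ iterations the robot at one far end has essentially not yet felt the opposite end, so its VP value (hence its estimated coordinate) is still off by $\Omega(1)$; a light-cone / finite-propagation-speed argument (information moves at most one edge per iteration) makes this immediate. Combining the $O(l)$ upper bound with the $\Omega(l)$ lower bound yields $\Theta(l)$.
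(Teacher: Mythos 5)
Your overall skeleton --- explicit tridiagonal $\bm{T}$, diagonal conjugation to a symmetric tridiagonal matrix, a spectral estimate for the upper bound, and a finite-propagation-speed argument for the lower bound --- is the same as the paper's, and your lower-bound sketch is essentially the paper's argument (the paper instantiates the light cone cleanly: for $n\le l/2-1$ the two middle robots carry identical VP amounts, because the update rule is translation invariant and the initial state is uniform, so their identical position estimates force a maximum error of at least $1/2$). The genuine gap is in the upper bound, and it begins with your spectral claim: for $\mathrm{k}\neq 0$ the chain is biased, $\epsilon_1\neq\epsilon_2$, and after the diagonal conjugation the symmetric matrix has interior diagonal $1-\epsilon_1-\epsilon_2$ and off-diagonal $\sqrt{\epsilon_1\epsilon_2}$, so every non-Perron eigenvalue is at most $1-\epsilon_1-\epsilon_2+2\sqrt{\epsilon_1\epsilon_2}=1-(\sqrt{\epsilon_2}-\sqrt{\epsilon_1})^2$; the gap is $\Theta(1)$, not $\Theta(1/l^{2})$, and the $1-\cos(\pi/l)$ term is only an $O(1/l^{2})$ correction on top of a constant. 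The $\Theta(1/l^{2})$ gap you quote belongs to the unbiased case $\epsilon_1=\epsilon_2$ (i.e. $\mathrm{k}=0$), which is precisely the regime in which the algorithm carries no positional information.

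Because of this, the step you defer as ``where the real work lies'' --- extracting an ``effective gap $\Theta(1/l)$ seen by smooth test vectors'' via coupling, hitting times, or a resolvent analysis --- is both unexecuted and aimed at the wrong mechanism; no $\Theta(1/l)$ gap exists in the spectrum, so that route as stated would not close the argument. The linear-in-$l$ iteration count actually comes from the norm conversion, not from the gap: the eigenvectors are orthogonal only in the $\bm{S}$-weighted inner product, and passing from that weighted norm back to the entrywise relative (logarithmic) error defining $\delta^{(n)}$ costs a factor $\gamma^{l/2}$, because the stationary profile is geometric and spans a range of order $\gamma^{l}$. Since $\delta^{(n)}$ is measured on a log scale, this exponentially large prefactor contributes only an additive term $\tfrac{l}{2}\ln\gamma$ inside the logarithm, giving $n_{\max}\approx\bigl(l\ln\gamma+O(1)\bigr)/\bigl(2\bigl|\ln\bigl(1-\epsilon_1-\epsilon_2+2\sqrt{\epsilon_1\epsilon_2}\bigr)\bigr|\bigr)=O(l)$, i.e. a constant gap multiplied against the logarithm of the inverse minimal stationary mass --- which is exactly the paper's computation. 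As written, your proposal therefore contains no valid $O(l)$ upper bound; repairing it requires replacing the $\Theta(1/l^{2})$-gap picture with the constant-gap-plus-$\gamma^{l/2}$-prefactor argument (or an equivalent uniform-mixing bound of the form $t\lesssim \mathrm{gap}^{-1}\ln(1/\pi_{\min})$), while your $\Omega(l)$ lower bound can be kept essentially as is.
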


\begin{proof}
In this case, VP transition matrix $\bm{T}$ is

%Following discussions are under the assumption that the robots in considered system are positioned on a rectangular grid, can communicate with immediate neighbors, and the initial VP amount on each robot is 1 unit. An illustration of valid robot configuration is shown in Figure \ref{fig:grids}.

%Without loss of generality, we only consider a localization task to obtain X coordinates of robots. In the initial state, the VP distribution is translational symmetric in all directions, and the transition rule is symmetric in all directions except X direction, so in further iterations the VP distribution will continue to be symmetric in all directions except X direction. With previous statement, all concerned scenarios can be reduced to a 1D problem as illustrated in Figure \ref{fig:sub1d}, where $l$ robots are distributed evenly on a 1D straight line with a gap of 1 unit and can only exchange VPs with immediate neighbors. Figure \ref{fig:1ddetail} illustrates how robots are distributed and how they exchange VPs with neighbors in detail.

%It has been mentioned in Remark \ref{rem:convergence} that the rate of convergence is determined by the second largest eigenvalue of the 

\vspace{-2mm}
\begin{equation}
	\bm{T}_{l\times l}\hspace{-1mm}=\hspace{-1mm}
	\left(\hspace{-2mm}
	\begin{array}{cccccc}
	1-\epsilon_1 &\hspace{-2mm} \epsilon_2 \\
	\epsilon_1 &\hspace{-2mm} 1-\epsilon_1-\epsilon_2 &\hspace{-1mm} \epsilon_2\\
	&\hspace{-2mm}\ddots&\hspace{-1mm}\ddots&\hspace{-1mm}\ddots\\
	&&\hspace{-1mm}\epsilon_1 & \hspace{-1mm}1-\epsilon_1-\epsilon_2 & \hspace{-2mm}\epsilon_2\\
	&&&\hspace{-1mm}\epsilon_1 & \hspace{-2mm}1-\epsilon_2
	\end{array}\hspace{-2mm}
	\right)
\end{equation}

\noindent where $\mathrm{\epsilon}_1=\mathrm{k}_1 e^{-\mathrm{k}}$, and $\mathrm{\epsilon}_2=\mathrm{k}_1 e^{\mathrm{k}}$. Without loss of generality, further assume $\mathrm{\epsilon}_1<\mathrm{\epsilon}_2$ and name $\gamma=\frac{\mathrm{\epsilon}_2}{\mathrm{\epsilon}_1}$. Using these notations, linearly independent eigenvectors $\{\bm{v}_q\}\ (q=1,2,\cdots,l)$ and corresponding eigenvalues $\{\lambda_q\}\ (q=1,2,\cdots,l)$ of matrix $\bm{T}$ can be expressed as follows

\begin{equation}
\left\{
\begin{aligned}[c]
	\lambda_1&=1\\
	 (\bm{v}_1)_i&=\gamma^{-i}\\
	 \lambda_p&=1-\mathrm{\epsilon}_1-\mathrm{\epsilon}_2+2\sqrt{\mathrm{\epsilon}_1 \mathrm{\epsilon}_2} \cos{\frac{(p-1)\pi}{l}}\\
	 (\bm{v}_p)_i&=\gamma^{-\frac{i}{2}} \cos{\left(\frac{i(p-1)\pi}{l}+\mathrm{\phi}_p\right)}
\end{aligned}
\right.
\label{eqn:eig}
\end{equation}

\noindent where index $p=2,3,\cdots,l$, and $\mathrm{\phi}_p$ satisfy

\vspace{-2mm}
\begin{equation}
	\cos(\mathrm{\phi}_p)=\sqrt{\gamma}\cos{\left(\frac{(p-1)\pi}{l}+\mathrm{\phi}_p\right)}
\end{equation}

For clarity, matrices $\bm{S}$ and $\bm{S}'$ are introduced with definition as follows.

\vspace{-4mm}
\begin{eqnarray}
    \bm{S}'&=&\mathop{\mathrm{diag}}{\left\{\gamma^{\frac{1}{2}},\gamma^{1},\cdots,\gamma^{\frac{l}{2}}\right\}}\\
    \bm{S}&=&\mathop{\mathrm{diag}}{\left\{\gamma^{1},\gamma^{2},\cdots,\gamma^{l}\right\}}=\bm{S}'^2
\end{eqnarray}
\vspace{-4mm}

\noindent Because $\bm{S}'\bm{T}\bm{S}'^{-1}$ is a symmetric matrix, its eigenvectors are orthogonal, thus we have the following relationship for eigenvectors $\{\bm{u}_q\}\ (q=1,2,\cdots,l)$ of matrix $\bm{S}\bm{T}\bm{S}^{-1}$

\vspace{-4mm}
\begin{eqnarray}
    &\bm{u}_1 = \bm{1}_l\\
    &\bm{u}_i^{\mathrm{T}}\,\bm{S}^{-1}\,\bm{u}_j=0\ \ (i\neq j)
    \label{eqn:ortho}
\end{eqnarray}

\noindent Further define operation $\|\bm{v}\|_S$ which maps a vector of dimension $l$ to a non-negative real number by

\vspace{-2mm}
\begin{equation}
    \|\bm{v}\|_S=\sqrt{\bm{v}^{\mathrm{T}}\,\bm{S}^{-1}\,\bm{v}}
\end{equation}
\vspace{-4mm}

\noindent It is evident that for arbitrary vector $\bm{v}$ of dimension $l$, $\max_i |\bm{v}_i|\leq\gamma^{\frac{l}{2}}\|\bm{v}\|_S$.

%Now we discuss the upper limit of iteration required for maximum localization error in the swarm (denoted by $\delta$) to reach $\delta_0$. $\delta$ can be expressed by

Now we proceed to discuss the converging behavior of $\delta^{(n)}$ which can be expressed using $\bm{\xi}$ as follows

\begin{equation}
    \delta^{(n)}=\max\limits_i \frac{\left|\ln\left(1+\frac{\gamma^i\bm{\xi}^{(n)}_i-\gamma^i\bm{\xi}^{(\infty)}_i}{\gamma^i\bm{\xi}^{(\infty)}_i}\right)\right|}{\ln\gamma}
\end{equation}

%Noticing that $\gamma^i\bm{\xi}^{(\infty)}_i$ is a constant irrelevant to index $i$, name it as $\psi$

\noindent Noticing that $\gamma^i\bm{\xi}^{(\infty)}_i$ is the same for all index $i$, a sufficient condition for $\delta^{(n)}<\delta_0$ can be expressed as

\vspace{-2mm}
\begin{equation}
    \max\limits_i \left|\gamma^i\,\bm{\xi}^{(n)}_i-\gamma^i\,\bm{\xi}^{(\infty)}_i\right| \leq \psi(1-\gamma^{-\delta_0})
    \label{eqn:maxmod}
\end{equation}

\noindent where $\psi=\gamma^i\bm{\xi}^{(\infty)}_i$. At the same time, $\gamma^i\bm{\xi}^{(n)}_i$ is also the $i$th component of $\bm{S}\,\bm{\xi}^{(n)}$, thus by using the property of $\|\cdot\|_S$, a sufficient condition for inequation \ref{eqn:maxmod} is

\vspace{-2mm}
\begin{equation}
    \|\bm{S}\,\bm{\xi}^{(n)}-\bm{S}\,\bm{\xi}^{(\infty)}\|_S \leq \gamma^{-\frac{l}{2}}\psi(1-\gamma^{-\delta_0}))
    \label{eqn:suffcond}
\end{equation}
\vspace{-4mm}

\noindent If $\bm{S}\,\bm{\xi}^{(n)}$ is decomposed using eigenvectors of $\bm{S}\bm{T}\bm{S}^{-1}$ as

\vspace{-2mm}
\begin{equation}
    \bm{S}\,\bm{\xi}^{(n)}=(\bm{S}\bm{T}\bm{S}^{-1})^n(\bm{S}\,\bm{\xi}^{(0)})=\sum\limits_{i=1}^{l}a^{(n)}_i \bm{u}_i
\end{equation}
\vspace{-2mm}

\noindent where $a^{(n)}_i$ is the component of $\bm{S}\,\bm{\xi}^{(n)}$ on $\bm{u}_i$. The following relationship of coefficient $a^{(n)}_i$ could be found

\vspace{-2mm}
\begin{equation}
    a^{(n)}_i=\lambda_i^n a^{(0)}_i
\end{equation}
\vspace{-5mm}

\noindent which means

\vspace{-2mm}
\begin{equation}
    \bm{S}\,\bm{\xi}^{(n)}-\bm{S}\,\bm{\xi}^{(\infty)}=\sum\limits_{i=2}^{l}\lambda_i^n a^{(0)}_i \bm{u}_i
\end{equation}
\vspace{-2mm}

\noindent remembering Equation \ref{eqn:ortho}, left hand side of inequation \ref{eqn:suffcond} can be written as

\vspace{-4mm}
\begin{equation}
\begin{split}
     \|\bm{S}\,\bm{\xi}^{(n)}-\bm{S}\,\bm{\xi}^{(\infty)}\|_S=\sqrt{\sum\limits_{i=2}^{l}\lambda_i^{2n} (a^{(0)}_i)^2 \|\bm{u}_i\|_S^2}\\
     \leq \lambda_2^n \|\bm{S}\,\bm{\xi}^{(0)}-\bm{S}\,\bm{\xi}^{(\infty)}\|_S
\end{split}
\label{eqn:estimate}
\end{equation}

\noindent Combining Equation \ref{eqn:suffcond} and \ref{eqn:estimate} and noticing that $\lambda_2$ is the second largest eigenvalue of $\bm{T}$, the upper limit of iteration required can be given as

\vspace{-2mm}
\begin{equation}
    -\left.\ln \frac{\gamma^{\frac{l}{2}}\|\bm{S}\,\bm{\xi}^{(0)}-\bm{S}\,\bm{\xi}^{(\infty)}\|_S}{\psi(1-\gamma^{-\delta_0})}\right/\ln \lambda_2
\end{equation}
\vspace{-2mm}

\noindent By evaluating the upper bound of numerator and substituting $\lambda_2$ by its upper bound of $1-\mathrm{\epsilon}_1-\mathrm{\epsilon}_2+2\sqrt{\mathrm{\epsilon}_1 \mathrm{\epsilon}_2}$, a simpler form of upper limit $n_{max}$ could be obtained

\vspace{-2mm}
\begin{equation}
    n_{max}=-\frac{l\ln\gamma-\ln(\gamma-1)-2\ln(1-\gamma^{-\delta_0})}{2\ln(1-\mathrm{\epsilon}_1-\mathrm{\epsilon}_2+2\sqrt{\mathrm{\epsilon}_1 \mathrm{\epsilon}_2})}
    \label{eqn:nmax}
\end{equation}

\noindent The denominator is a constant independent of $l$, while the numerator is dominated by $l\ln\gamma$ when $l$ is large, thus the asymptotic time complexity for achieving an accuracy of $\delta^{(n)}\leq\delta_0$ is at most $O(l)$.
	
Furthermore, in this system, $\bm{\xi}^{(n)}_i\ (n<i,n\leq l-i)$ takes the form of $f^{(n)}(\bm{\xi}^{(0)}_{i-n},\cdots,\bm{\xi}^{(0)}_{i+n})$ and $f^{(n)}$ is independent of $i$. Thus, when $n\leq\frac{l}{2}-1$, $\bm{\xi}^{(n)}_{\lfloor l/2\rfloor}$ and $\bm{\xi}^{(n)}_{\lfloor l/2\rfloor+1}$ will be identical because $\bm{\xi}^{(0)}_i=1$. This means that these two robots, though 1 unit apart, share the same localization result when $n\leq\frac{l}{2}-1$, thus the maximum localization error must be larger than $1/2$, which indicates that the asymptotic time complexity to achieve an accuracy higher than $\frac{1}{2}$ is at least $\Omega(l)$.

As the asymptotic time complexity for achieving an accuracy of $\delta^{(n)}\leq\delta_0$ is at most $O(l)$ and at least $\Omega(l)$, it is $\Theta(l)$.

\end{proof}

\begin{remark}
	\rm
	In fact, regardless of the method used, the asymptotic time complexity of localization in a swarm is physically limited by the transmission speed of information which is proportional to the length of the swarm, that means no algorithm with asymptotic complexity lower than $\Theta(l)$ is possible.
\end{remark}

\begin{remark}
    In higher dimensional systems, VPE localization algorithm offers stunning theoretical performance. For example, consider a localization task in 2D and 3D robot swarms as illustrated in Figure \ref{fig:grids}, in a rectangular 2D robot swarm with $l\times l$ robots or a 3D swarm with $l\times l\times l$ robots, the asymptotic time complexity of localization will still be $\Theta(l)$ (due to translational symmetry, asymptotic time complexity of localization in such a swarm is identical to that of 1D scenario) instead of $\Theta(l^2)$ or $\Theta(l^3)$ which is the lower limit of all centralized algorithm (time required for storing information transmitted from all robots in memory).
    
    \begin{figure}[htbp]
	\centering
	\setcounter{subfigure}{0}
	
	%\subfigure[1D]{
	%	\begin{minipage}[t]{0.3\linewidth}
	%		\centering
	%		\includegraphics[width=\linewidth]{g1d.pdf}
	%		\label{fig:sub1d}
	%	\end{minipage}
	%}
	\subfigure[2D]{
		\begin{minipage}[t]{0.4\linewidth}
			\centering
			\includegraphics[width=\linewidth]{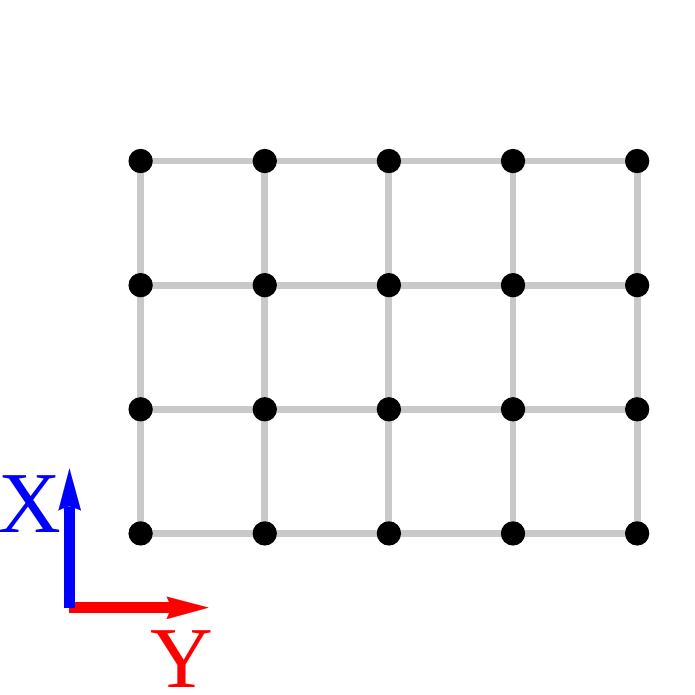}
			\label{fig:sub2d}
		\end{minipage}
	}
	\subfigure[3D]{
		\begin{minipage}[t]{0.4\linewidth}
			\centering
			\includegraphics[width=\linewidth]{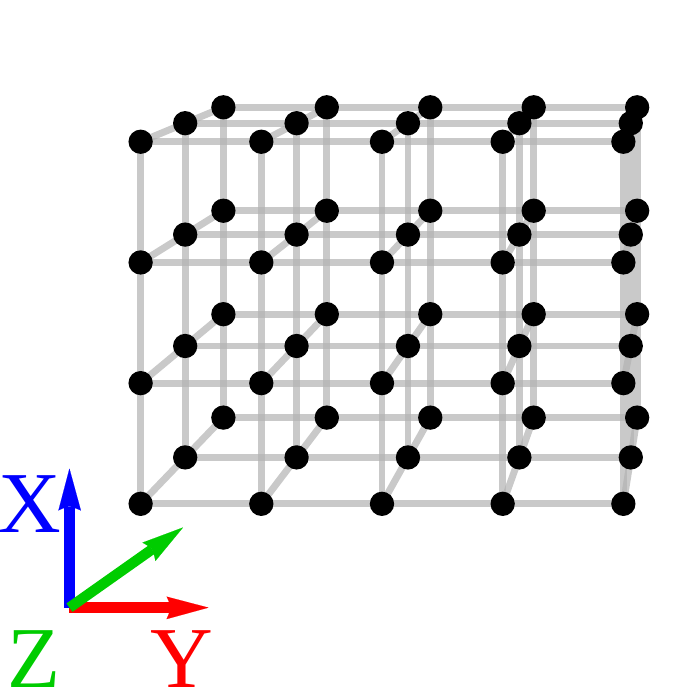}
			\label{fig:sub3d}
		\end{minipage}

	}
	\centering
	\caption{Robot swarm in higher dimension. Black dots represent robots and lines represent connections between robots.}
	\label{fig:grids}
	\vspace{-2mm}
\end{figure}
\end{remark}

\begin{remark}\label{rem:ndef}
	\rm
	Iterations required to achieve certain accuracy across the swarm will vary when the distribution of robots or the connectivity between robots changes, however some qualitative trends regarding these factors can be found:
	
	\begin{itemize}
		\item Iterations required decreases when there are stronger connections and more long-range connections between robots.
		\item Iterations required increases when the swarm is larger or more sparsely distributed.
	\end{itemize}
	
	%Furthermore, when robots are distributed in a convex shape or near convex shape, a semi-quantitive rule can be found to provide estimation and comparison of time complexity of executing localization within swarms with different robot distribution and connectivity status. The rule is given by
	
	%\begin{equation}
	%\begin{split}
	%	l_{equiv}=\frac{3\,l\,\mathrm{stdiv}_x}{\sum\limits_{i<j} \bm{\Gamma}_{i,j}|\vec{\bm{r}}_{i,j}\cdot\hat{\bm{x}}|}\\
	%	\text{Time complexity}=\Theta(l_{equiv})
	%\end{split}
	%\label{eqn:ndef}
	%\end{equation}
	
	%\noindent where $l$ is the number of robots and $\mathrm{stdiv}_x$ is the standard deviation of X coordinates of all robots. In the expression, term $(\sum_{i<j} \bm{\Gamma}_{i,j}|\vec{\bm{r}}_{i,j}\cdot\hat{\bm{x}}|)/l$ reflects the average connection strength and length between robots, and $\mathrm{stdiv}_x$ reflects the initial localization error, thus quantitatively the expression coincides with previous analysis.
\end{remark}

\begin{remark}
	\rm
	Often we need to perform localization on moving robots, e.g. in shape formation tasks. In such cases, robots' position will not vary much between two consecutive localization tasks. This means that by setting the initial VP amount to the final VP amount in previous localization tasks, initial error $\bm{\xi}^{(0)}-\bm{\xi}^{(\infty)}$ could be greatly reduced, which can greatly accelerate the localization process.
	
	This method is used in simulations conducted in Section \ref{sec:sfresult}, the result of which shows up to 90\% time could be saved compared with a localization procedure starting from scratch.
\end{remark}

%\section{Design of Robot}\label{sec:robotdesign}

\section{Localization Verification}\label{sec:locresult}

In this section, localization results in simulations and experiments using VPE localization algorithm are presented and analyzed.

\subsection{Results in simulations}

Simulations are first carried out to provide localization results in $x$ axis only on 1D and 2D robot swarms in order to investigate the asymptotic time complexity of localization tasks executed with different robot distribution patterns.

Four robot distribution patterns are used in simulations:

\begin{enumerate}
	\item[(1)] 1D distribution: robots evenly distributed on a line with spacing of 1.
	\item[(2)] 2D distribution: robots are located in a square whose edge is parallel to $x$ or $y$ axis.
	\item[(3)] 2D distribution: robots are located in a square whose diagonal is parallel to $x$ or $y$ axis.
	\item[(4)] 2D distribution: robots are located in a annulus whose outer radius is twice the inner radius.
\end{enumerate}

\noindent If robots are distributed in 2D, their position will be selected randomly while keeping the distance between adjacent robots at approximately 1. The size of the robot swarm is controlled by a dimensionless variable named as `size factor' which reflects the span of the swarm on $x$ axis. In 1D scenarios size factor is defined as the number of robots, and in 2D scenarios size factor is defined as the square root of number of robots. Robot swarms with size factor between 2 and 100 are considered in the simulation. In simulations with 1D distribution of robots, conditions where maximum transmission distance of light (light will be too dim to be detected over this distance) is $1.5$, $2.5$ or $3.5$ are evaluated respectively. Parameters used in simulations with 1D distribution of robots are $\mathrm{k}_1=0.05,\,\mathrm{k}=0.15$ and $r_0=(\lfloor d\rfloor+1)/2$ where $d$ denote the transmission distance of light. In simulations with 2D distribution of robots, maximum transmission distance of light is set to 2.5 and $\mathrm{k}_1=0.05,\,\mathrm{k}=0.15,\,\mathrm{r}_0=1.72$ are used. In all simulations, localization results are considered as `converged' when the maximum discrepancy between current localization result and the localization result of equilibrium state is less than 0.1 unit.

Accuracy of localization result is provided in Figure \ref{fig:simerr}. In most simulations with size factor larger than $8$, localization result in the equilibrium state has average localization error less than $0.15$. The reason why localization has greater error with smaller swarm is because $\mathrm{r}_0$ chosen before is optimized for large swarms. If $\mathrm{r}_0$ is a variable and is optimized such that localization error is minimized, then in all scenarios localization error will be less than $0.12$ unit (see subfigure (b)).

Iterations required for VPE localization algorithm to converge in swarm with different size and distribution are shown in Figure \ref{fig:convt}. It could be observed that the iterations required to reach a certain accuracy are linearly correlated to the size factor of the shape instead of number of robots. Furthermore, By comparing group Line\underline{~}1 to Line\underline{~}3, one can observe that less iterations are required to reach a certain accuracy when robots can sense light emitted by further robots. It is also worth mentioning that in the case where 10000 robots are located in a square (size factor = 100), merely 6000 iterations are required for the algorithm to converge, the number can be even more staggering with larger swarm or swarm in 3D. This phenomenon indicates that this algorithm is especially suitable for large swarm.

\begin{figure}
    \centering
	\includegraphics[width=\linewidth]{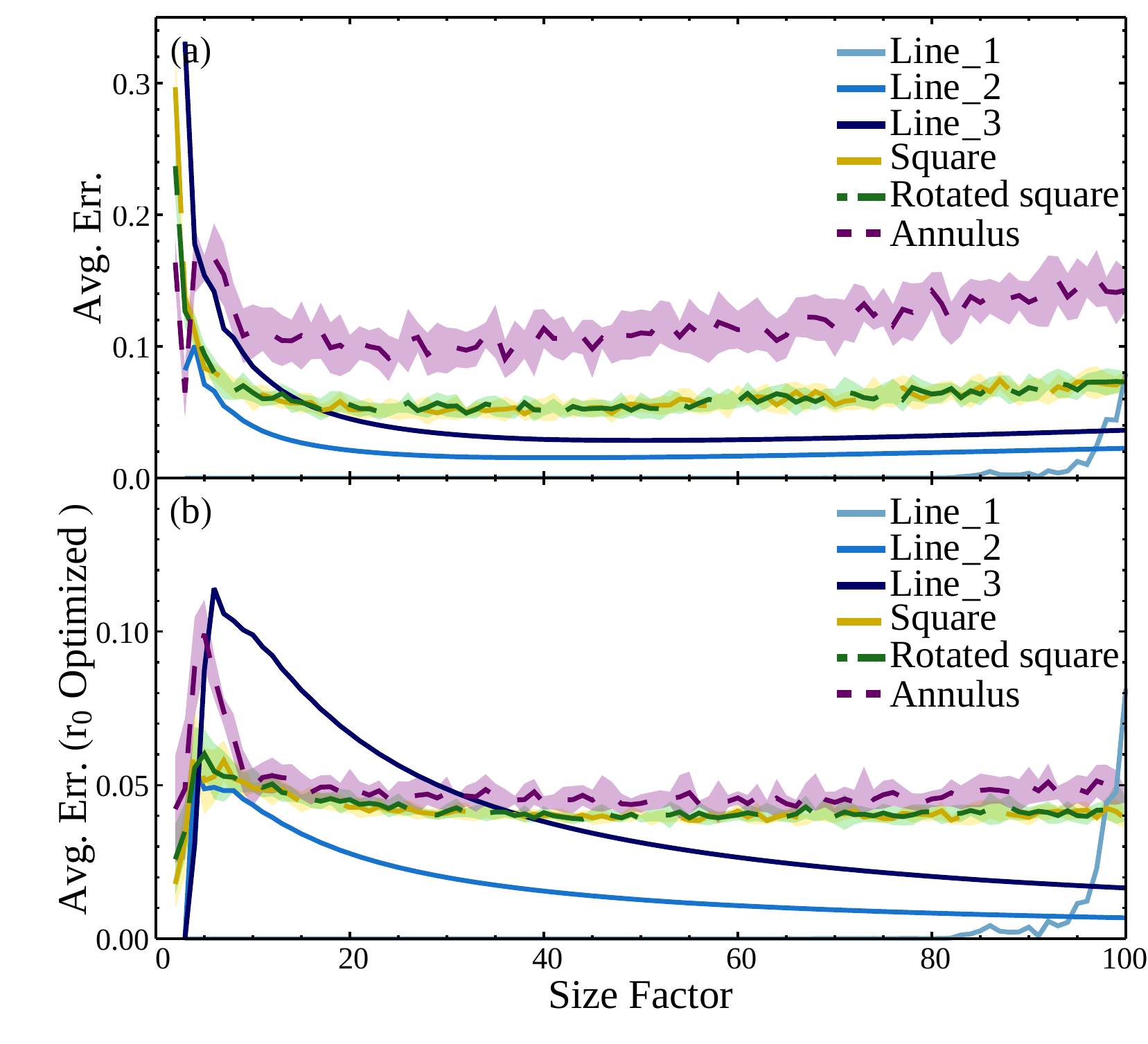}
	\vspace{-7mm}
	\caption{Average localization error in the equilibrium state. Subfigure (a) displays the localization error with previously described settings while subfigure (b) displays the localization error when $\mathrm{r}_0$ is a variable and is optimized such that localization error is minimized. Curve Line\underline{~}1 to Line\underline{~}3 correspond to distribution (1) where maximum transmission distance of light is 1.5, 2.5 and 3.5. Curve Square, Rotated square, and Annulus correspond to configurations (2), (3), (4) respectively and is the average of 10 runs. Shades behind these three curves indicate the standard deviation of the data. Note that the increase in error in configuration Line\underline{~}1 at large size factor is due to numerical error.}
	\vspace{-2mm}
    \label{fig:simerr}
\end{figure}

\begin{figure}
    \centering
	\includegraphics[width=\linewidth]{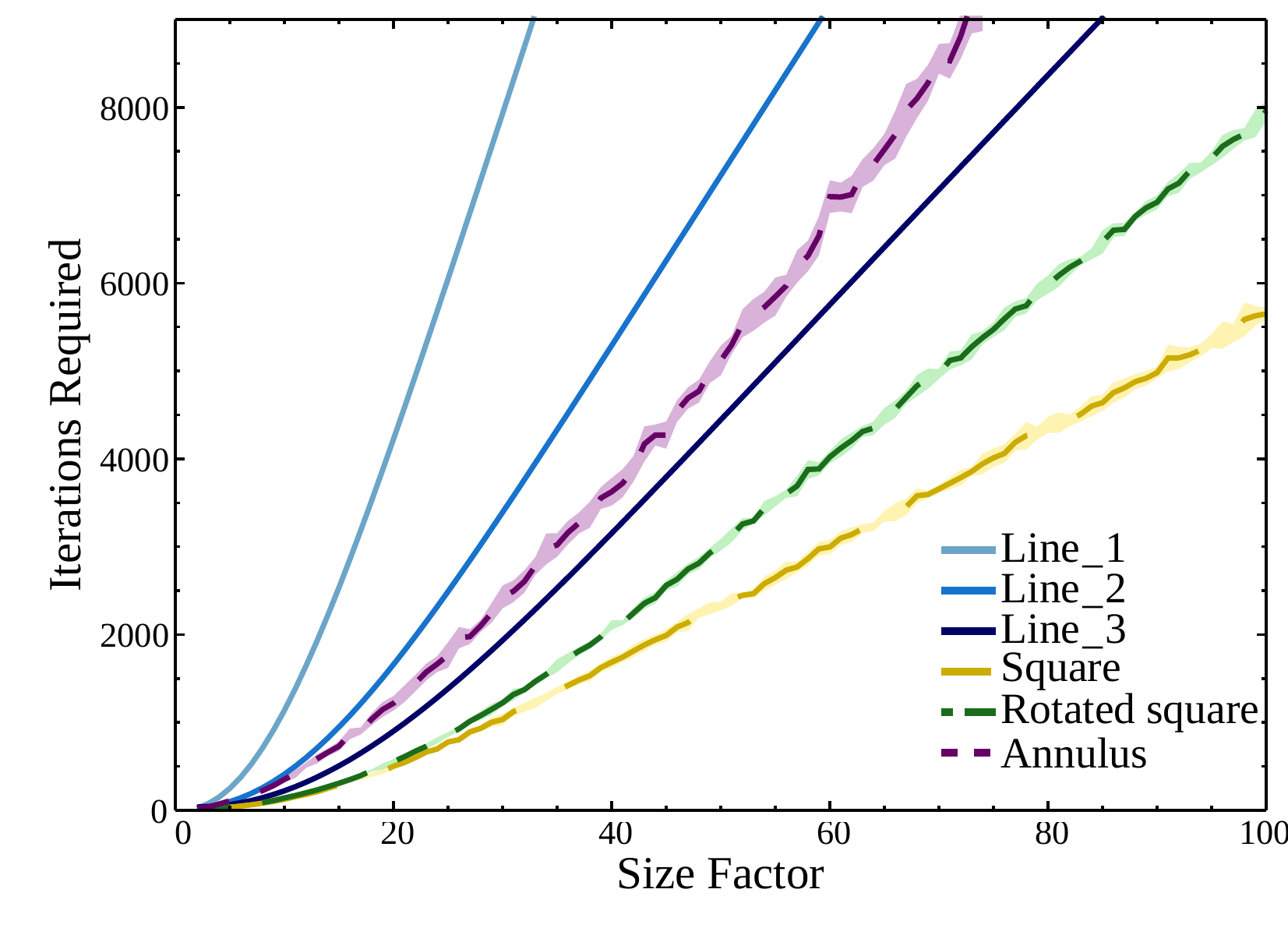}
	\vspace{-7mm}
	\caption{Iterations required to converge. The definition of curves are the same as in Figure \ref{fig:simerr}.}
	\vspace{-4mm}
    \label{fig:convt}
\end{figure}

In previous discussion, we focused on obtaining $x$ coordinates of robots only. By executing VPE localization algorithm twice in $x$ and $y$ axis respectively, each robot can obtain a 2D localization result. To provide better comprehension of the procedure of VPE localization algorithm and its capabilities, 2D localization is achieved in three scenarios:

\begin{enumerate}
	\item[(1)] Robots are located in a square whose edge is parallel to $x$ or $y$ axis.
	\item[(2)] Robots are located in a annulus whose outer radius is twice the inner radius.
	\item[(3)] Same as (2) except 10\% measurement noise is added to the sensor of robots.
\end{enumerate}

The simulation results are shown in Figure \ref{fig:simlocres}. All parameters are the same as those used in previous simulations with 2D distribution of robots.

\begin{figure*}[htbp]
	\centering
	\includegraphics[width=\linewidth]{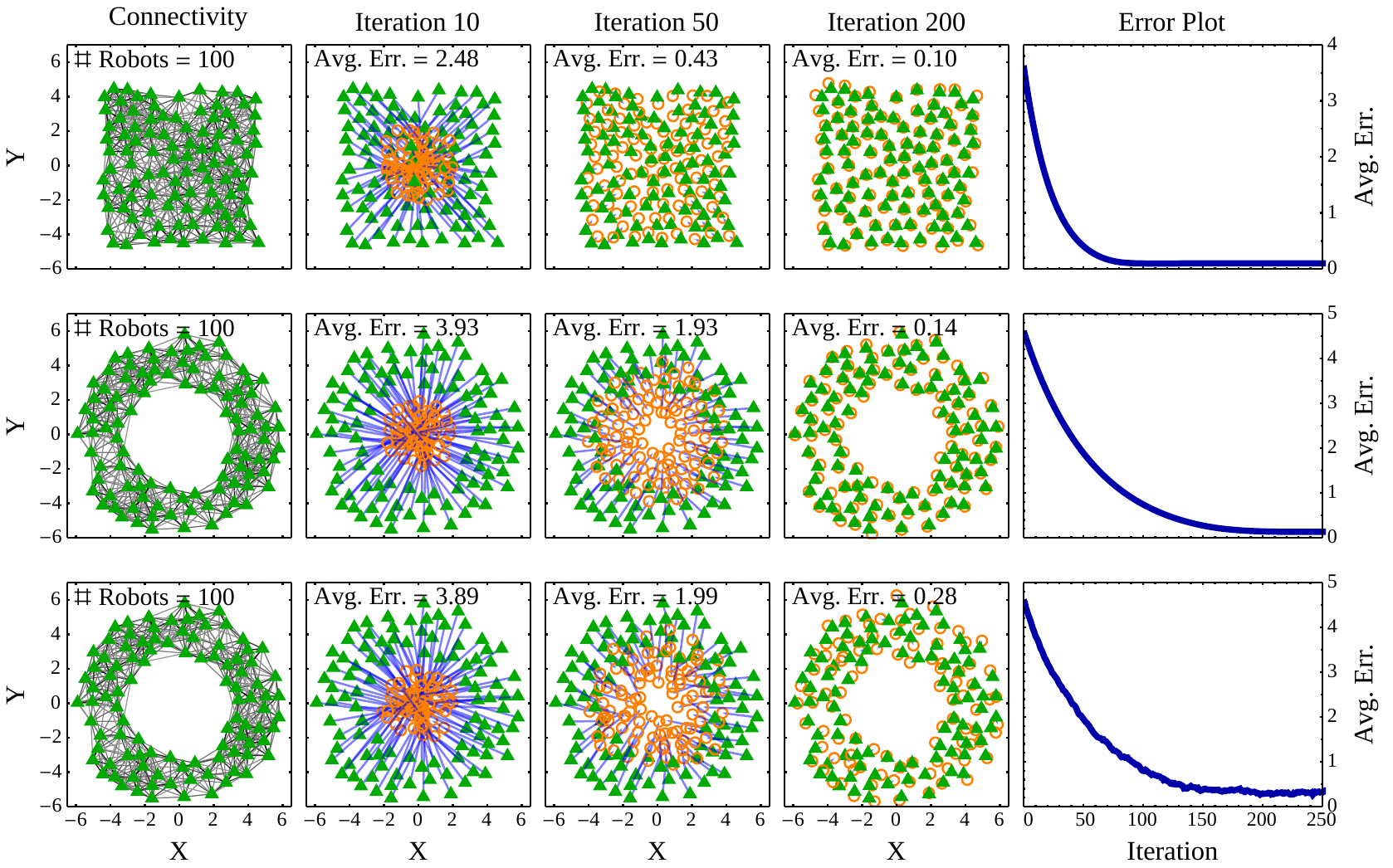}
	\centering
	\vspace{-5mm}
	\caption{Localization procedure with different robot distribution. The first column shows the connectivity of the whole swarm, where each green triangle marks the actual position of a robot and each line represents that light emitted by one robot can be sensed by another, the darker the line is, the stronger the light sensed. Columns 2,3,4 display the localization results in different iterations of the localization process, where green triangles still represent the actual position of robots, yellow circles represent localization results of the robot, and blue arrows represent position error of the robot. The last column shows the trend of localization error versus iteration. In all figures, both the robots' centroid and the localization result's centroid are set to $\{0,0\}$ through translation.}
	\vspace{-3mm}
	\label{fig:simlocres}
\end{figure*}

One can see that in Figure \ref{fig:simlocres}, localization results gradually converge in less than 200 iterations in all three scenarios and the average error of localization is only approximately 0.1 unit, which is sufficiently accurate for most tasks. Also, in the last case where 10\% of noise is added to all sensors readings, our algorithm still managed to achieve an average localization error of 0.5 units, which verified that VPE localization algorithm is resistive to noise. It is also worth noting that the localization results' centroid is within 1 unit of the actual centroid of robots in all three scenarios, which is ideal for tasks like shape formation.

\subsection{Results in experiments}

To further verify that the VPE localization algorithm is suitable for real applications with strong environmental noise and interference, experiments with 52 low-cost (approximately \$35 each), self-designed robots were conducted.

Figure \ref{fig:env} shows the test environment and all robots involved in the experiment, and Figure \ref{fig:indivrobot} shows the detailed view of a robot. Robots move on the ground and a camera records the movements of robots and the light signal feedback sent by them.

\begin{figure}[htbp]
	\begin{center}
		\includegraphics[width=0.45\textwidth]{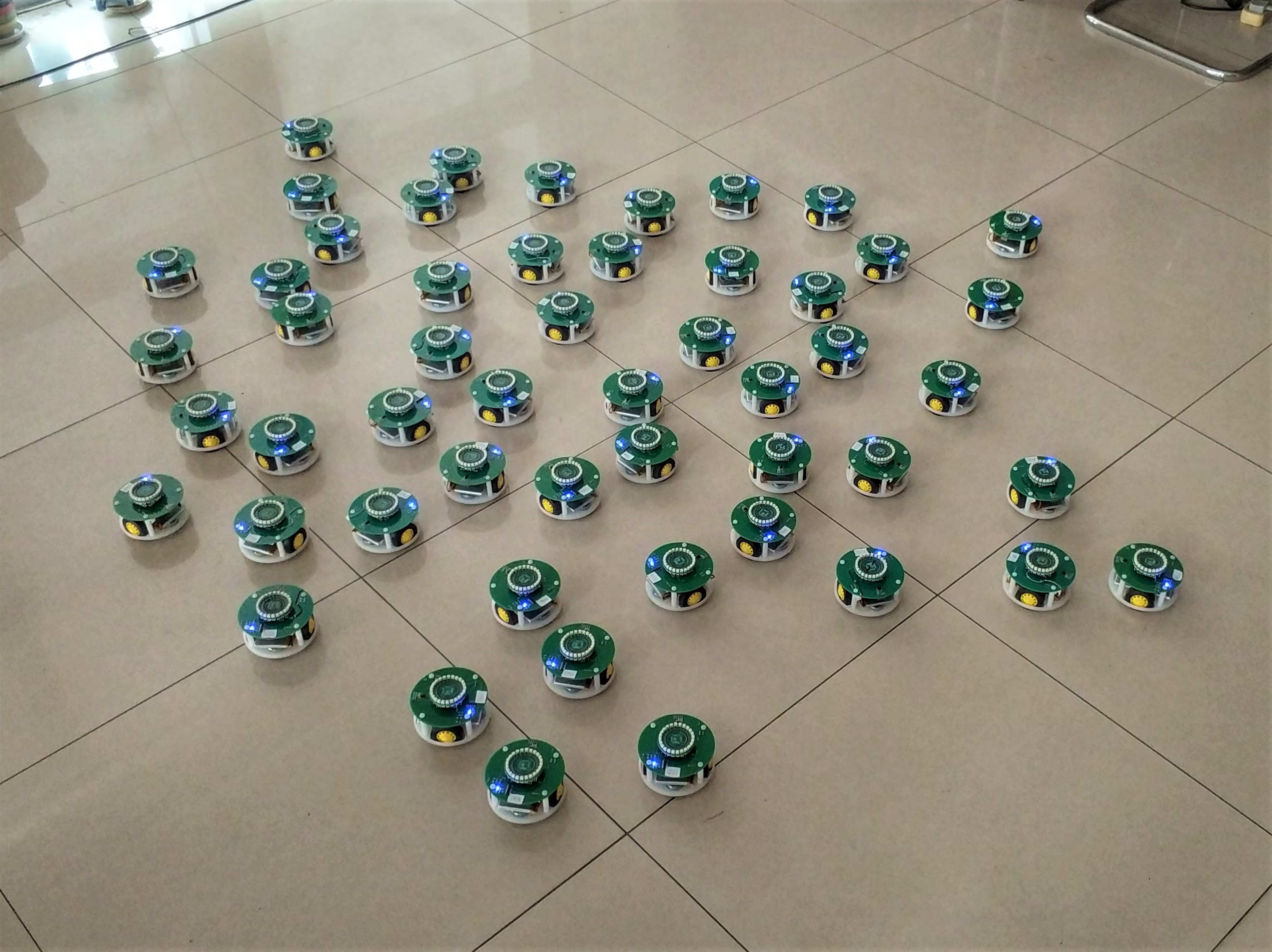}
		\caption{All 52 robots and the test environment.}
		\label{fig:env}
	\end{center}
	\vspace{-6mm}
\end{figure}

\begin{figure}[htbp]
	\begin{center}
		\includegraphics[width=0.45\textwidth]{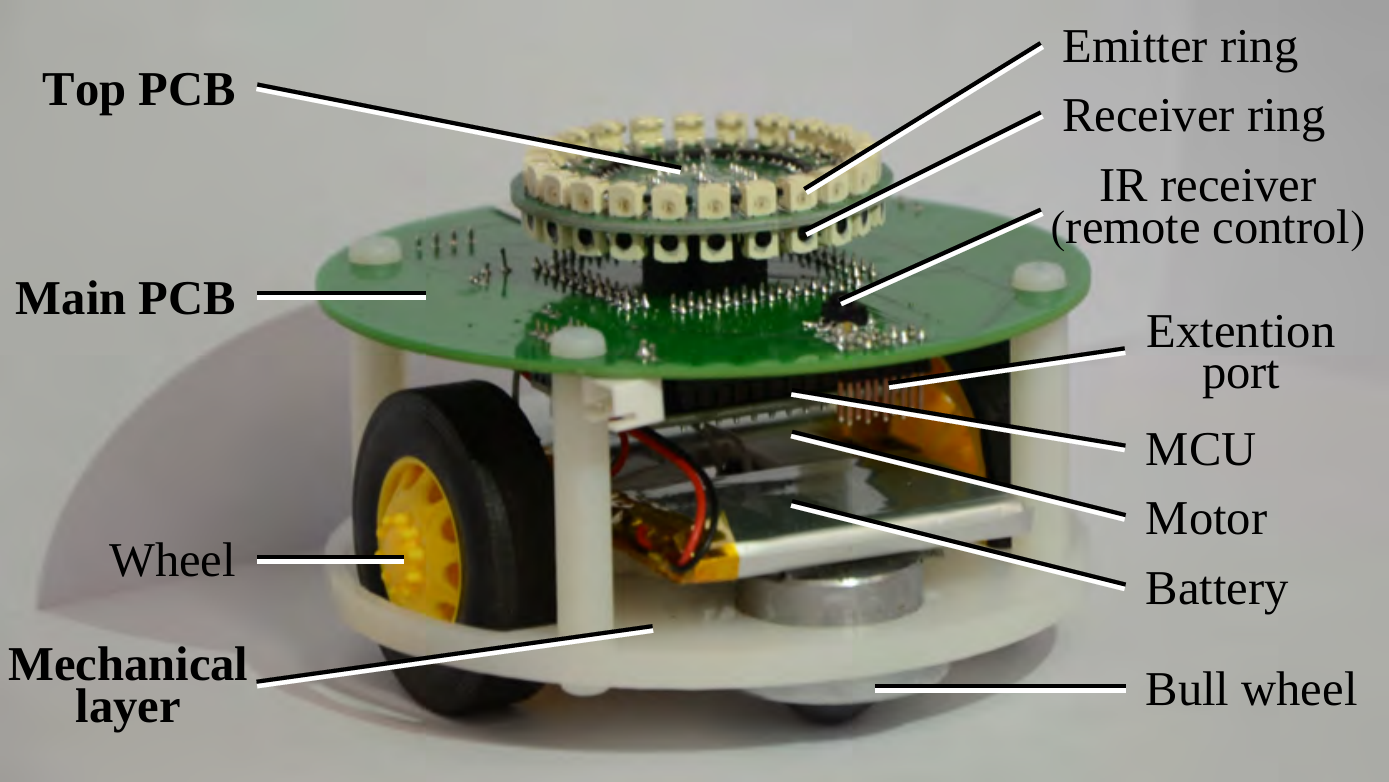}
		\caption{An individual robot. Emitters and receivers are mounted on the top PCB and all other electrical devices are directly mounted on main PCB or connected to main PCB through wires.
		}
		\vspace{-7mm}
		\label{fig:indivrobot}
	\end{center}
\end{figure}

The size of each robot is $10\times10\times7\,\mathrm{cm}$. The robot is controlled by a STM32F407VET6 microprocessor, driven by two active wheels and supported by two bull wheels. QMC5883L three-axes magnetometer is employed to measure the environment magnetic field and provide an agreed direction amongst all robots. An isotropic infrared receiver and an anisotropic infrared emitter consisting of a ring of phototransistor and a ring of infrared LEDs respectively are mounted on the top PCB which has a diameter of $7\,\mathrm{cm}$. Receivers and emitters mounted on top PCB are responsible for VP transmission and calibration.

%How each part of robot cooperates is shown in Figure \ref{fig:illusdiag}.

%\begin{figure}[htbp]
%	\begin{center}
%		\includegraphics[width=0.45\textwidth]{robot_setup.pdf}
%		\caption{Robot setup.}
%		\label{fig:illusdiag}
%	\end{center}
%\end{figure}

Using robots equipped with VPE localization algorithm, several localization tests are carried out.

In the first experiment, localization is performed when $N$ robots (N ranging from 2 to 7 are tested in experiment) are distributed evenly on a line with gap of 17.5 cm with $\mathrm{k}_0=0.02,\,\mathrm{k}=0.15$, and $\mathrm{r}_0$ is optimized afterwards to minimize localization error in the equilibrium state. In order to minimize interference, calibration process as mentioned in Remark \ref{rem:calib} is not involved, instead the drift in origin of localization result is eliminated in data processing afterwards by subtracting the average of localization result (for simplicity, in further discussion we name this as `translated localization result'). We consider the VP distribution has reached equilibrium when the variation of translated localization result is less than $0.1 \mathrm{r}_0$ in 200 iterations. In this experiment, each iteration takes approximately 200 ms, and most of the time are consumed in setting up DACs. We estimates that with better implementation each iteration could be reduced to less than 1 ms. Localization error in the equilibrium state and iterations required to reach equilibrium are investigated, results are shown in Figure \ref{fig:experr} and \ref{fig:expline}.

\begin{figure}[htbp]
	\centering
	\includegraphics[width=.4\textwidth]{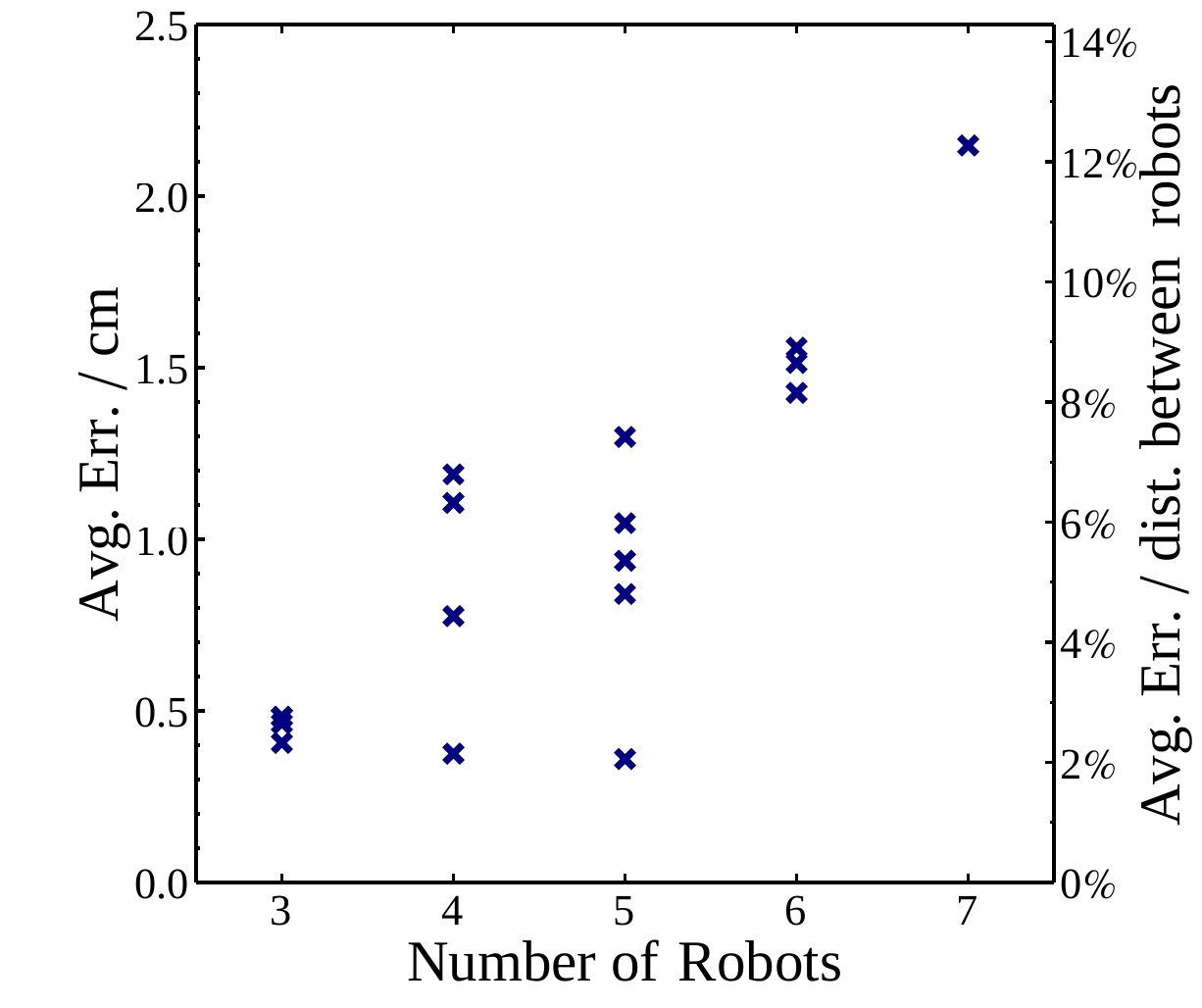}
	\vspace{-2mm}
	\caption{Localization error in experiments with 1D robot distribution.}
	\vspace{-2mm}
	\label{fig:experr}
\end{figure}

\begin{figure}[htbp]
	\centering
	\hspace{-5mm}\includegraphics[width=.4\textwidth]{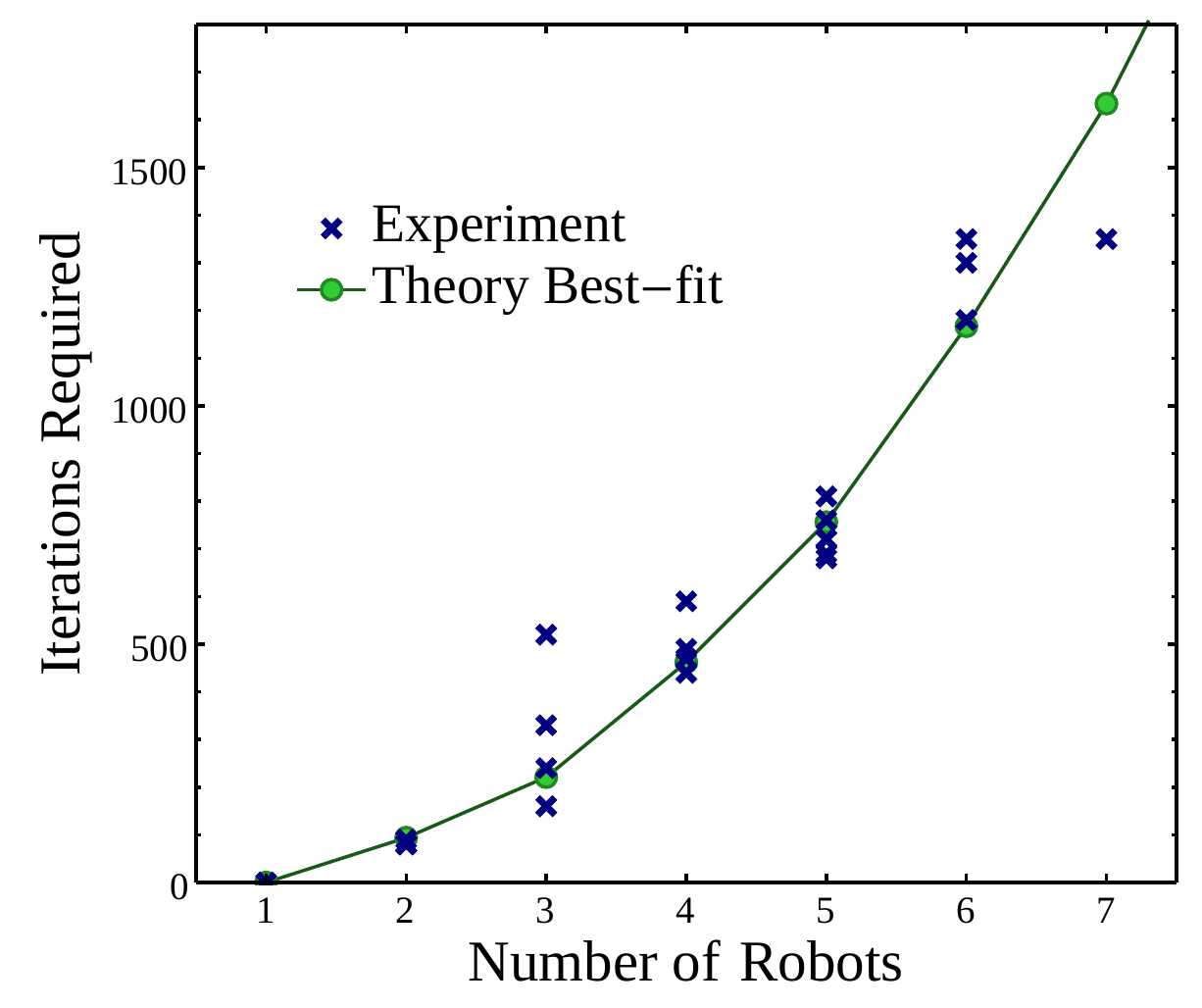}
	\vspace{-2mm}
	\caption{Iterations required to converge in experiments with 1D robot distribution. The green line is the iteration required to reach a average accuracy of 0.1 with maximum light transmission distance of 20 cm and $\mathrm{k}_1=0.0085,\,\mathrm{k}=0.15,\,\mathrm{r}_0=1$.}
	\vspace{-2mm}
	\label{fig:expline}
\end{figure}

As shown in Figure \ref{fig:experr}, in all experiments where equilibrium state is reached, error in translated localization result can be controlled to less than 2.5 cm, which is 15\% of the distance between adjacent robots, and in majority of them, localization error is less than 1.5 cm. Furthermore, as shown in Figure \ref{fig:expline}, localization is completed in less than 6 minutes in all experiments (could be less than 1.5 seconds with better implementation), and experimental points fit well with the theory best-fit.

As mentioned before, the origin of localization result will drift due to the influence of environment and imprecision of sensors. Therefore, we implement the calibration algorithm mentioned in Remark \ref{rem:calib} and execute it every 20 iterations. Experimental results with the same robot configuration with and without calibration algorithm are given in Figure \ref{fig:calib}. While the localization result without calibration keeps drifting in $x+$ direction and reaches a maximum of 35 cm at iteration 2000, with the calibration algorithm implemented, the localization result becomes stable in the end with maximum drift of 1.4 cm throughout the whole localization procedure. 

\begin{figure}[htbp]
	\begin{center}
		\centering
		\setcounter{subfigure}{0}
		
		\subfigure[Without Calibration]{
			\begin{minipage}[t]{0.45\linewidth}
				\centering
				\includegraphics[width=\linewidth]{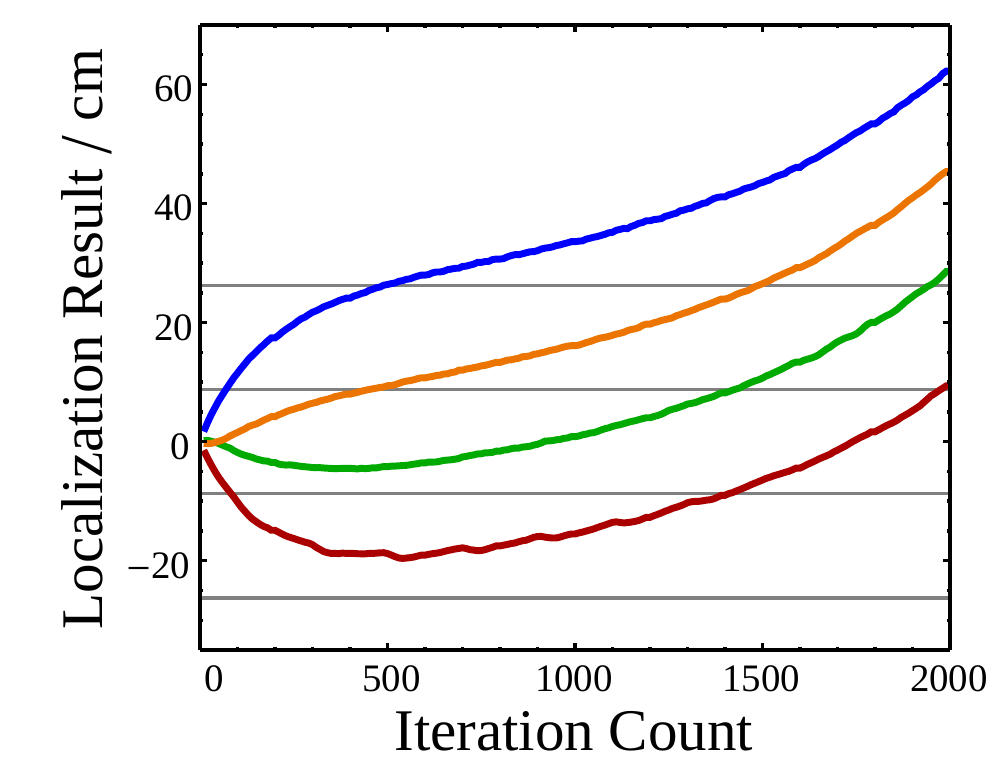}
			\end{minipage}
			\label{fig:withoutcalib}
		}
		\subfigure[With Calibration]{
			\begin{minipage}[t]{0.45\linewidth}
				\centering
				\includegraphics[width=\linewidth]{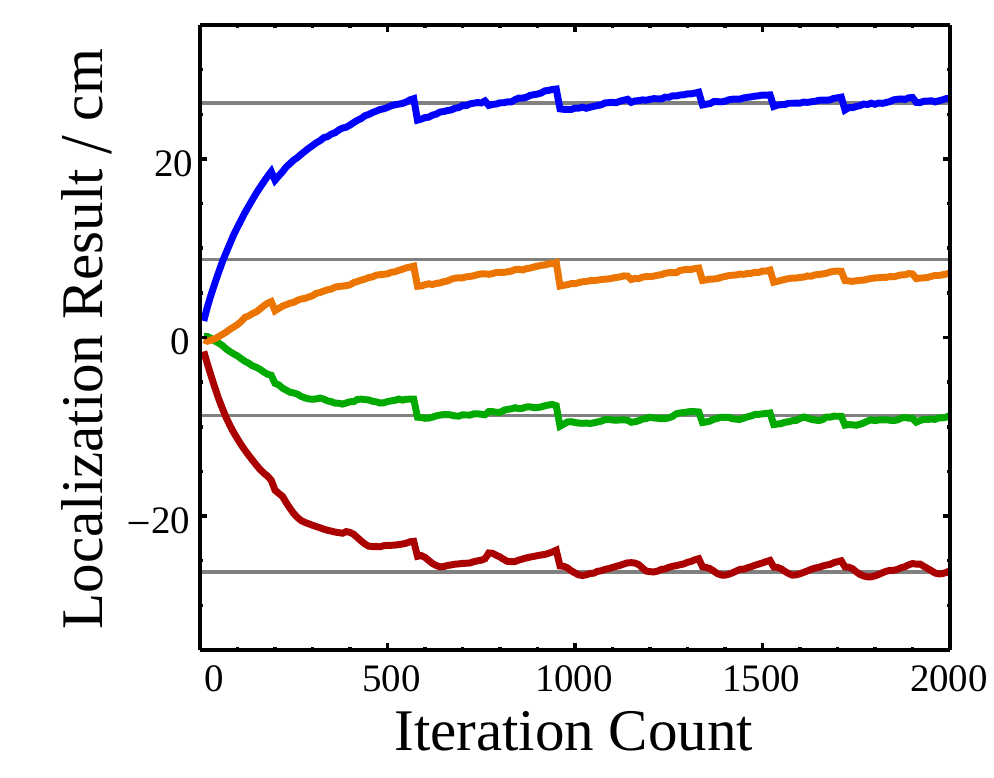}
			\end{minipage}
			\label{fig:withcalib}
		}
		\caption{Comparison between localization process with and without calibration. Gray horizontal lines represent actual position of robots.}
		\label{fig:calib}
	\end{center}
\end{figure}

Then we carried out experiments with 2D robot distribution using 52 robots with calibration algorithm implemented. Parameters used in localization are $\mathrm{k}_0=0.01,\,\mathrm{k}=0.15$, and $\mathrm{r}_0$ is adjusted to minimize the error. Robot distribution, the process of localization, and the trend of localization error with iteration count are provided in Figure \ref{fig:exp2d}.

\begin{figure*}[bp]
	\begin{center}
		\centering
		\includegraphics[width=\linewidth]{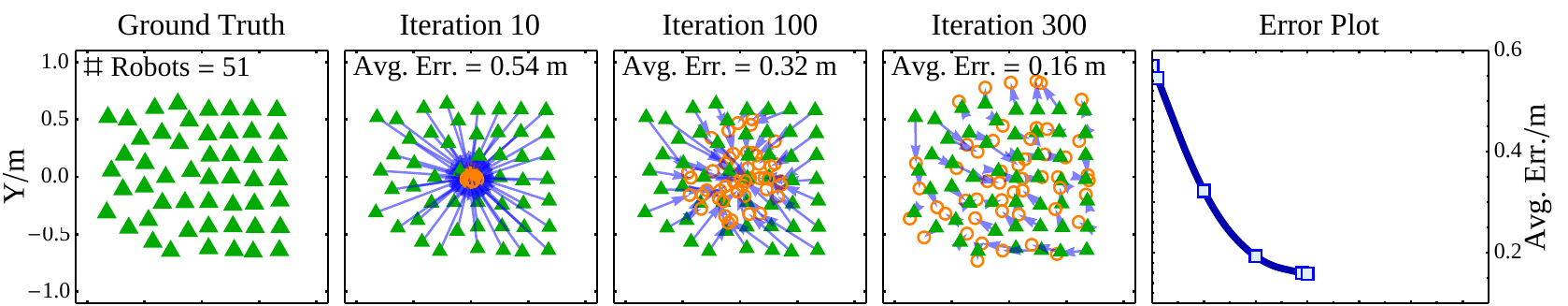}
		\includegraphics[width=\linewidth]{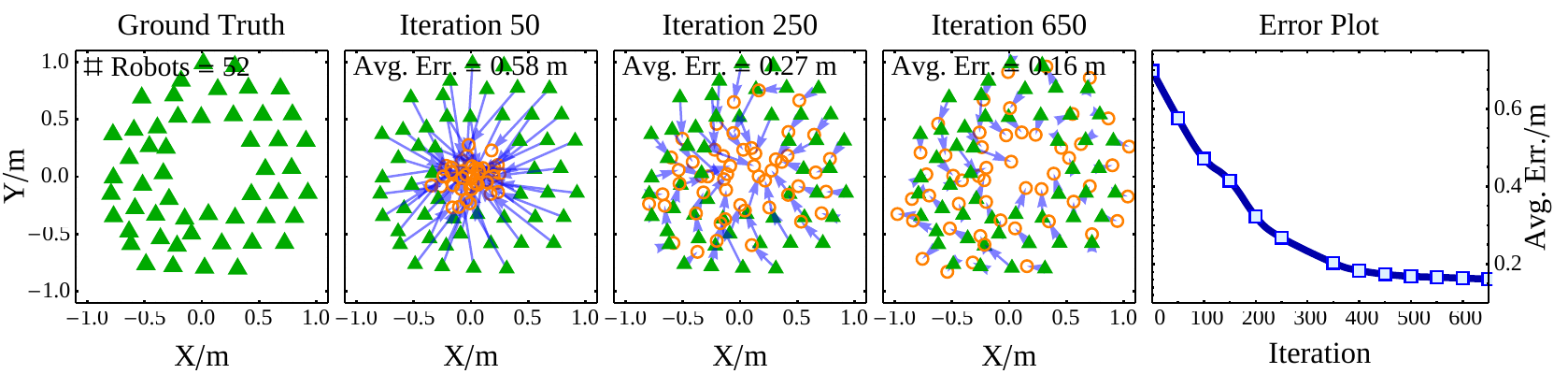}
		\vspace{-6mm}
		\caption{Localization results in experiment with 2D robot distribution. In all figures, both the robots' centroid and the localization result's centroid are set to \{0,0\} through translation, however, in reality the origin of localization result is only 4 cm apart from the actual centroid of the swarm. The first experiment is terminated early because equilibrium is reached in 300 iterations. In the second experiment, we cannot obtain a specific robot's localization result due to hardware issues, thus we neglect this robot in the error analysis.}
		\label{fig:exp2d}
	\end{center}
\end{figure*}

\noindent The average error is approximately half the average distance between adjacent robots. Though not as good as what simulations suggest due to strong environmental influence, it is already sufficient for a variety of tasks including shape formation.

\section{Applications to Shape formation}\label{sec:sfresult}

Based on VPE localization algorithm and combined with a deformation algorithm, autonomous and distributed shape-formation can be achieved.

The general approach used in deformation is similar to Virtual Force Field Method proposed by \cite{hou2012dynamic}, in which virtual forces control the movement of robots. Modification has been made to the collision avoidance algorithm and movement algorithm to better fit our robot system. In our modified algorithm, robots' movement at each step is the summation of two factors: an attractive factor to pull robots inside the target shape and a repulsive factor to keep robots approximately evenly spaced while preventing them from separating from the swarm. The first factor is determined by the position of the robot as well as the targeted shape, and the second factor is determined by the relative position of robots in the neighborhood. The mathematical expression of the displacement of robots in each step is given by

\vspace{-2mm}
\begin{equation}
	\vec{\bm{d}}_{i}=\vec{\bm{d}}_{i,att}(\Omega,\bm{\chi}_i)+\vec{\bm{d}}_{i,rep}(\vec{\bm{r}}_{i,j_n})
	\label{eqn:sfcomp}
\end{equation}
\vspace{-5mm}

\noindent where $\vec{\bm{d}}_{i}$ is the movement of robot $i$ in this step. $\vec{\bm{d}}_{i,att}$ and $\vec{\bm{d}}_{i,rep}$ are the contribution of attractive and repulsive factor. $\Omega$ is the target shape, $\bm{\chi}_i$ is the current localization result of robot $i$ and $\Delta \vec{\bm{r}}_{i,j_n}$ represents the relative displacement of all robots in the neighborhood of robot $i$.

Detailed pseudo-code of the shape formation algorithm used in our simulations and experiments is described in Appendix \ref{sec:collavoid} and \ref{sec:overallalg}.

\subsection{Results in simulations}

In simulations, up to 52 robots are used to form triangle shape and `K' shape. For better comparison with experimental result, average distance between adjacent robots is set to approximately $0.2$ and maximum transmission distance of light is set to $0.5$. Parameters used in localization process are: $\mathrm{k_1}=0.01,\,\mathrm{k}=0.15,\,\mathrm{r}_0=0.35$. Initial localization process takes 100 iterations and preceding re-localization takes 10 iterations each. Results of simulations are shown in Figure \ref{fig:shapesim}.

\begin{figure*}[htbp]
	\centering
    \includegraphics[width=\linewidth]{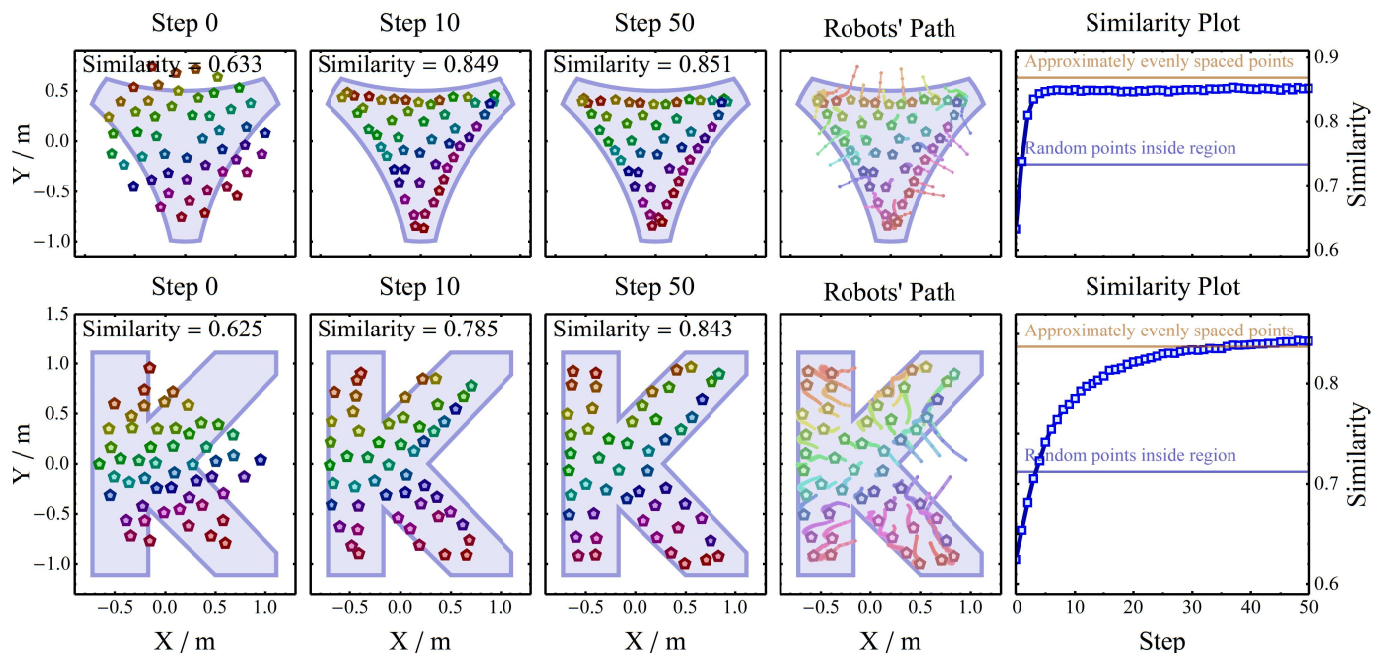}
	\centering
	%\vspace{-4mm}
	\caption{Shape formation results in simulations.}
	\label{fig:shapesim}
\end{figure*}

% Videos showing the complete shape formation process in simulations can be found online at \url{https://youtu.be/Xau0nw2VPgA} or \url{https://www.bilibili.com/video/av32586717/?p=4}

One can observe that, once the program starts, robots are first attracted by the boundary of the shape and will try to fit inside the shape. Notice that in this period, the density of robot is largely different in different parts of the swarm. As time goes, the repulsive factor starts to play its role, equalizing the density and finalizing the shape through small adjustments. During the shape formation process, VPE localization algorithm provides accurate enough localization results with average error at approximately 0.05 unit.

To further develop the shape formation topic, a self-invented method to quantitatively assess the quality of formed shape is presented. The quality of shape formation is evaluated by a similarity value ranging from 0 to 1 which is described by

\vspace{-3mm}
\begin{equation}
	\mathcal{S}\hspace{-.5mm}=\hspace{-1mm}1\hspace{-.5mm}-\hspace{-.5mm}\min\limits_{\sigma}\frac{\int\limits_{\Sigma}\left|in(\vec{\bm{r}},\Omega)\hspace{-.5mm}-\hspace{-.5mm}\frac{A}{\pi\sigma^2l}\sum\limits_{i=1}^{l}e^{-\frac{|\vec{\bm{r}}-\vec{\bm{r}}_i|^2}{\sigma^2}}\right|\mathrm{d}s}{2A}
	\label{eqn:sigma}
\end{equation}
\vspace{-2mm}

\noindent where $A$ is the area of the target shape $\Omega$, $\Sigma$ is the entire plane, $\vec{\bm{r}}_i$ is the position of robot $i$ and $in(\vec{\bm{r}},\Omega)$ is a function defined as

\vspace{-2mm}
\begin{equation}
	in(\vec{\bm{r}},\Omega)=\left\{
	\begin{aligned}[c]
	1& &(\vec{\bm{r}}\not\in\Omega)\\
	0& &(\vec{\bm{r}}\in\Omega)
	\end{aligned}
	\right.
\end{equation}
\vspace{-2mm}

\noindent It is evident that when robots are distributed approximately evenly inside the shape, similarity will be close to 1, and if all robots are far away from target shape then similarity will be close to 0.

The evolution of similarity in a shape formation process is provided in Figure \ref{fig:shapesim}. As expected, similarity rapidly increases with time and eventually reaches a relatively large value, which indicates that the shape formation results resemble the target shapes well in both cases.

\begin{figure*}[htbp]
	\centering
	
    \includegraphics[width=\linewidth]{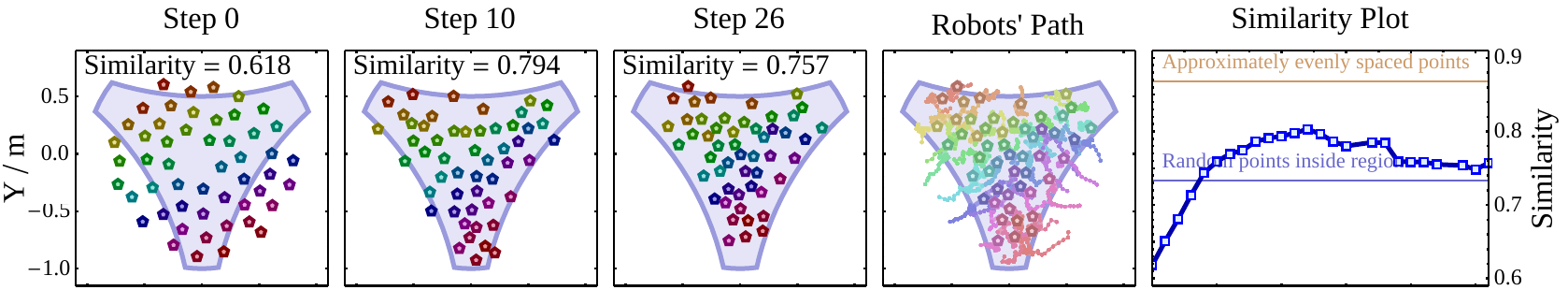}
    \vspace{1.5mm}
    \includegraphics[width=\linewidth]{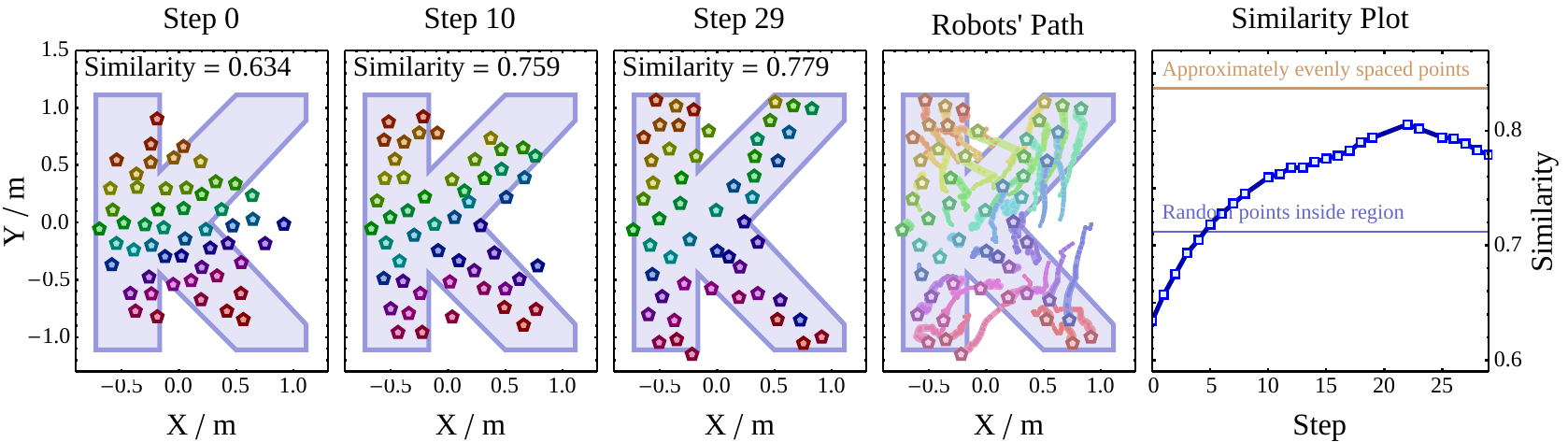}

	\centering
	\vspace{-3mm}
	\caption{Shape formation results in experiments. The target shape has been scaled to fit the produced shape.}
	\vspace{-5mm}
	\label{fig:shapeexp}
\end{figure*}

\subsection{Results in experiments}

In experiments, triangle shape and `K' shape are chosen as targets to test shape formation. In all experiments, parameters of localization are: $\mathrm{k_0}=0.01,\mathrm{k}=0.15, \mathrm{r}_0=0.81\,m$. Initial localization takes 1000 iterations and preceding re-localization takes 100 iterations each. The results are displayed in Figure \ref{fig:shapeexp}.

%Videos showing the complete shape formation process in experiments can be found online at \url{https://youtu.be/7eiV6-IKStc} and \url{https://youtu.be/zGle3PJj8jk} or at \url{https://www.bilibili.com/video/av32586717/?p=2} and \url{https://www.bilibili.com/video/av32586717/?p=1}.

The robot swarm generates the pre-defined shape in both situations with maximum similarity value of approximately 0.8, which is close to the similarity of approximately evenly spaced points inside the region. The overall trend of robots' paths is consistent with what simulation suggests. The reasons for the decreased similarity value after certain steps are mainly due to the characteristics of similarity evaluation algorithm which prefers points located inside the shape instead of points on the edge of the shape. From the video one can see that subjectively robots are perfecting the shape generated in each step.

Compared to the method proposed by \cite{rubenstein2014programmable} where robots line up and move on the boundary of the shape, our approach to deformation is more fluid like, which requires a magnitude less iterations in deformation and resembles the behavior of natural swarm systems better. Rapid and accurate shape formation achieved in both simulations and experiments shows that VPE localization algorithm is suitable for providing relative localization results when executing more complex tasks.

\section{Discussion}\label{sec:discussion}

Though with multiple merits, VPE localization algorithms has its own limitations which requires further optimization. As proved in Section \ref{sec:anaVPEmethod}, VPE localization algorithm has $O(l)$ asymptotic time complexity, i.e. it can achieve a certain accuracy $\delta_0$ in at most $\mathrm{\alpha} l$ iterations where $\alpha$ is a constant. However, coefficient $\alpha$ is at the magnitude of several hundreds in simulations and experiments, which is far from theoretical limit of $0.5$. This phenomenon can be observed in Figure \ref{fig:convt} and \ref{fig:expline} where iterations required to converge do not increase linearly with the size factor when size factor is smaller than 10. Thus when operating on swarms with less than several hundreds of robots, the algorithm could even be slower compared with algorithms which have higher asymptotic time complexity. This problem can be partially addressed by implementing acceleration algorithms to speed up converging process. A simple example for such acceleration algorithm is setting larger $\mathrm{k}_1$ and $\mathrm{k}_2$ when the localization result is already near equilibrium. In authors' crude simulations, such method can provides up to 90\% acceleration at a slight risk of failing to converge. Furthermore, in tasks like shape formation, after initial localization, subsequent localization will be much faster as well. If VPE localization algorithm is used in systems without direct communication ability using Algorithm \ref{alg:2}, the requirement for time synchronization between all robots in the swarm can be tricky as well. Currently, authors implement a simple algorithm to achieve time synchronization, in which a robot would start emitting light whenever its timer is up or it sensed light emitted by other robots. Such method might be prone to failure with larger swarms, but fortunately, VPE localization algorithm is resistant to addition and removal of robots, a robot can just skip an iteration if it fails to synchronize with other robots without influencing the overall localization process. Another property of VPE localization algorithm is that it relies more on the interaction between robots than internal calculation of each robot, so while the cost for internal computation and memory decreased, the cost for interaction increased. Because emitting signals are usually more energy consuming than internal computation, hardware and software designed for VPE localization algorithm needs to be as energy-efficient as possible.

We designed the robot just as a proof of concept of VPE localization algorithm, so the hardware and software are all designed to be as simple as possible, and performance and stability are not our major concern. For example, light sensor and emitters are chosen by convenience of assembly instead of response speed or accuracy, and codes are not optimized for error cancelling or energy saving. The crude design partially accounts for the slow speed and undesirable error of localization, which can possibly be fixed by following measures:

\begin{itemize}
	\item Use frequency modulated signal instead of DC signal to reduce the influence of environment and extend the range of signal. For example, we can change line \ref{alg:sst} and \ref{alg:sst1} in Algorithm \ref{alg:2} to the following:
	
	\hspace{4mm}Emit signal with frequency $\mathrm{f}_0$ and strength $I_2(\hat{\bm{r}})$.

	\hspace{4mm}Emit signal with frequency $\mathrm{f}_0+\mathrm{f}_1 \lfloor\log_{10}(\xi_+)\rfloor$ and strength $I_1(\hat{\bm{r}})/10^{\lfloor\log_{10}(\xi_+)\rfloor}$ where $\mathrm{f}_1$ is another frequency constant.

	In this way, most environmental noise can be filtered out by a band-pass filter. Furthermore, currently the size of swarm our hardware can support is restricted because the maximum of $I_1(\hat{\bm{r}})$ increases with the size of the swarm and emitters will be saturated if the swarm is too large in size. But by converting strength to frequency, we can avoid saturation of emitters thus enabling implementation in large scale swarms.
	
	\item Use high frequency LED to increase the response of emitter and use PIN photodiode or negatively biased PN photodiode to increase the response of receiver.
\end{itemize}

\section{Conclusion}\label{sec:conclusion}

This paper presents VPE localization algorithm for localizing inside homogeneous robot swarms. In the algorithm, robots repetitively exchange VPs with neighboring robots and obtain their location results based on the amount of VPs they own in the final state.

The convergence of the algorithm is verified by first reducing the problem to a convergence problem of $\lim_{n\to\infty} \bm{T}^{n}\,\bm{\xi}$ and then applying Perron–Frobenius theorem. Furthermore, a specific scenario where robots are evenly distributed on a line is investigated to evaluate the asymptotic time complexity of the algorithm, which shows that VPE localization algorithm has asymptotic time complexity of $\Theta(l)$ where $l$ is span of the swarm on $x$ direction.

VPE localization algorithm is then evaluated in simulations and robot swarm experiments to investigate whether VPE localization algorithm is suitable for real applications with noise and interference. Results confirm that our algorithm can be applied in real world robot swarms by achieving localization result with error at approximately half of the average distance between robots, and forming accurate triangle shape and 'K' shape in shape formation tasks.

%\begin{sm}
%	The complete code for simulation, hardware and software design for robots and raw images of experiments can be found online at \url{https://github.com/wjxway/Code_and_Data_of_VPE_Loc_Alg}.
%\end{sm}
%
%
%\begin{dci}
%	The Authors declare that there is no conflict of interest.
%\end{dci}

\section*{Acknowledgment}
Special Thanks to Yi Zhao for discussing with us when the project just started and providing assistance with finding proper localization algorithm.
\bibliographystyle{IEEEtran}
\bibliography{distributedlocalization}
\appendix

\subsection{Calibration algorithm for total VP amount}\label{sec:calib}

\vspace{3mm}

\begin{breakablealgorithm}\label{alg:calib}
	\caption{Pseudo-code for calibration of total VP amount}
	\begin{algorithmic}[1]
		\Require All robots:
		\begin{enumerate}
			\item [a.] Can emit light with isotropic angular distribution and sense ambient light intensity.
			\item [b.] Knows the VP amount it owns (named as $\xi$).
		\end{enumerate}
		\Ensure Normalized VP amount $\xi'$ for all robots which satisfies $\xi_i/\xi_j=\xi'_i/\xi'_j$ and $\sum_{i}\xi'_i=l$.
		\Notation
		\Notationx{$\xi_{A}$ - floating point variable used in the calibration process.}
		\Notationx{$n_{m}$ - iterations to calculate (predefined).}
		\Notationx{$\mathrm{k}_3, \mathrm{k}_4$ - constants controlling the intensity of light emitted.}
		\ForEach{robot $i\in \bm{V}$.}
		\State $\xi_{A}\gets \xi$
		
		\MyState{Begin emitting isotropic light signal with intensity of $\mathrm{k}_4$ unit.}\label{alg:sstcal}
		
		\State $c\gets$ sensed ambient light intensity.
		\State Stop emission.\label{alg:sedcal}
		
		\For{$n=0\to n_m-1$}
			\State Time synchronization.
			\State Possible addition: \textbf{repeat} line \ref{alg:sstcal}-\ref{alg:sedcal}.
			\MyState{Begin emitting isotropic light signal with intensity of $\mathrm{k}_3\,\xi_{A}$ unit.}		
			\State $s\gets$ sensed ambient light intensity.
		    \State Stop emission.
			\State $\xi_{A}\gets\left( 1 - \frac{c k_3}{k_4} \right) \xi_{A} + s$
		\EndFor
		
		\State $\xi'\gets\frac{\xi}{\xi_{A}}$
	\end{algorithmic}
\end{breakablealgorithm}
\vspace{3mm}

One might have noticed that this is just Algorithm \ref{alg:2} with $k=0$, thus it is not hard to understand why this algorithm can serve its purpose.

\subsection{Modified collision avoidance algorithm}\label{sec:collavoid}
By exploiting the fact that a robot would sense higher intensity of light when placed closer to other robots, an algorithm to keep robots approximately evenly spaced is created in order to avoid collision between robots. Detailed pseudo-code for collision avoidance algorithm is shown in Algorithm \ref{alg:4}.

\vspace{3mm}
\begin{breakablealgorithm}\label{alg:4}
	\caption{Pseudo-code for collision avoidance}
	\begin{algorithmic}[1]
		\Require All robots:
		\begin{enumerate}
			\item [a.] Share a common $x+$ direction.
			\item [b.] Can emit light with anisotropic angular distribution and sense ambient light intensity.
		\end{enumerate}
		\Ensure repulsive factor $\vec{\bm{d}}_{i,rep}$ in (\ref{eqn:sfcomp}) for all robots.
		\Notation
		\Notationx{$\hat{\bm{r}}$ - light emitting direction.}
		\Notationx{$\mathrm{k}'$ - constant controlling the anisotropic distribution of light emitted.}
		\Notationx{$\mathrm{C}_1$,$\mathrm{C}_2$ - constants controlling the strength of repulsive factor.}
		\Notationx{$I_0$ - constant controlling the desired distance between robots.}
		\Notationx{$d_0, d_1$ - constants controlling the strength of repulsive factor.}
		\ForEach{robot $i\in \bm{V}$.}
		
		\MyState{Begin emitting light with angular distribution	
			\begin{equation}
			I(\hat{\bm{r}})=e^{\mathrm{k}' \hat{\bm{r}}\cdot\hat{\bm{x}}}\hspace{5mm}(\mathrm{k_3}>0)
			\end{equation}
			\label{alg:s2}}
		
		\State $I_{x,+}\gets$ sensed ambient light intensity.
		
		\State Begin emitting light with angular distribution $I(-\hat{\bm{r}})$.
		
		\State $I_{x,-}\gets$ sensed ambient light intensity.
		
		\MyState{$d_{rep,x}\hspace{-.7mm}\gets\hspace{-.7mm}\mathrm{C}_1 \tanh \left(\mathrm{C}_2 \left(\max (I_{x,+},I_{x,-})\hspace{-.7mm}-\hspace{-.7mm}I_0\right)\right)\hspace{-.5mm}\frac{I_{x,+}-I_{x,-}}{I_{x,+}-I_{x,-}}$ where $d_{rep,x}$ represents the $x$ component of $\vec{\bm{d}}_{rep}$\label{alg:s6}}
		
		\MyState{\textbf{repeat} line \ref{alg:s2}-\ref{alg:s6} for $y$ direction and obtain $y$ component of repulsive factor $d_{rep,y}$.}
		
		\State $\vec{\bm{d}}_{rep}\gets\{d_{rep,x},d_{rep,y}\}$
	\end{algorithmic}
\end{breakablealgorithm}

\vspace{3mm}
When a robot is far away from the main swarm, the intensity of light it sensed would become weak, such that the repulsive factor of displacement will point to the direction where higher intensity is received, which is the direction towards the main swarm. On the other hand, if a robot is too close to another robot, a large repulsive factor with direction opposite to the other robot will be generated. This method can yield fair collision avoidance and aggregation result, yet robots do not need to have the ability to detect neighbors or know the relative distance and angle of neighboring robots.

\subsection{Overall shape formation algorithm} \label{sec:overallalg}

The overall shape formation algorithm is presented in Algorithm \ref{alg:5}.\\

\begin{breakablealgorithm}\label{alg:5}
	\caption{Modified Virtual Force Field Algorithm}
	\begin{algorithmic}[1]
		\Require All robots:
		\begin{enumerate}
			\item [a.] Know the target shape $\Omega$.
			\item [b.] Can calculate its position relative to the swarm $\bm{\chi}$.
			\item [c.] Can calculate the repulsive factor $\vec{\bm{d}}_{rep}$ given by Algorithm \ref{alg:4}.
		\end{enumerate}
		\Ensure Achieve shape formation.
		\Notation
		\Notationx{$\mathrm{C}_3$,$\mathrm{C}_4$ - constants controlling the strength of attractive factor.}
		\ForEach{robot $i\in \bm{V}$.}
		
		\While{\textbf{true}}
			\State Calculate position relative to the swarm $\bm{\chi}$.
			\MyState{$\vec{\bm{p}}_{near}\gets \mathop{\mathrm{argmin}}_{\vec{\bm{p}}\in\partial\Omega} |\bm{\chi}-\vec{\bm{p}}|$ where $\vec{\bm{p}}_{near}$ represents the closest point to $\vec{\bm{p}}$ on the boundary of region $\Omega$.}
			\State $\vec{\bm{f}}\gets\vec{\bm{p}}_{near}-\bm{\chi}$
			\MyState{$\vec{\bm{d}}_{att}\gets\mathrm{C}_3 (1+\tanh(\mathrm{C}_4 (2 in(\bm{\chi},\Omega)-1)|\vec{\bm{f}}|)\,(2 in(\bm{\chi},\Omega)-1)\hat{\bm{f}}$}
			\State Calculate $\vec{\bm{d}}_{rep}$.
			\State Move the robot by $\vec{\bm{d}}=\vec{\bm{d}}_{att}+\vec{\bm{d}}_{rep}$.
		\EndWhile
	\end{algorithmic}
\end{breakablealgorithm}
\vspace{3mm}

\end{document}